\crefname{section}{Sec.}{Secs.}
\Crefname{section}{Section}{Sections}
\Crefname{table}{Table}{Tables}
\crefname{table}{Tab.}{Tabs.}
\begin{document}

\title{Test Time Adaptation Using Adaptive Quantile Recalibration}




\author{
Paria Mehrbod$^{1,2}$ \quad
Pedro Vianna$^{3,2}$ \quad
Geraldin Nanfack$^{1,2}$ \quad
Guy Wolf$^{3,2}$ \quad
Eugene Belilovsky$^{1,2}$\\[1em]
$^1$Concordia University \quad
$^2$Mila -- Quebec AI Institute \quad
$^3$Université de Montréal\\
}
\maketitle
\newcolumntype{L}{>{\columncolor{blue!10}}c}
\newcommand{\RC}[1]{\cellcolor{red!10}{#1}}

\begin{abstract}
Domain adaptation is a key strategy for enhancing the generalizability of deep learning models in real-world scenarios, where test distributions often diverge significantly from the training domain. However, conventional approaches typically rely on prior knowledge of the target domain or require model retraining, limiting their practicality in dynamic or resource-constrained environments. Recent test-time adaptation methods based on batch normalization statistic updates allow for unsupervised adaptation, but they often fail to capture complex activation distributions and are constrained to specific normalization layers. We propose Adaptive Quantile Recalibration (AQR), a test-time adaptation technique that modifies pre-activation distributions by aligning quantiles on a channel-wise basis. AQR captures the full shape of activation distributions and generalizes across architectures employing BatchNorm, GroupNorm, or LayerNorm. To address the challenge of estimating distribution tails under varying batch sizes, AQR incorporates a robust tail calibration strategy that improves stability and precision. Our method leverages source-domain statistics computed at training time, enabling unsupervised adaptation without retraining models. Experiments on CIFAR-10-C, CIFAR-100-C, and ImageNet-C across multiple architectures demonstrate that AQR achieves robust adaptation across diverse settings, outperforming existing test-time adaptation baselines. These results highlight AQR’s potential for deployment in real-world scenarios with dynamic and unpredictable data distributions.
\end{abstract}

\section{Introduction}
\label{sec:intro}

Deep neural networks have demonstrated remarkable success across numerous computer vision tasks, including image classification, object detection, and segmentation. However, these models often suffer significant performance degradation when deployed in real-world environments that differ from their training conditions. This phenomenon, known as distribution shift or domain gap, poses a major challenge for the practical deployment of deep learning systems in applications where reliability and robustness are critical. 

Several domain adaptation methods have been proposed to address this issue, although they often assume prior knowledge of the target domain or require retraining, which hinders their applicability across different tasks and scenarios. Test-time adaptation (TTA) techniques have emerged as promising approaches that adapt models to target distributions during inference, relying solely on test data batches. These techniques are particularly suitable for real-world applications where distribution shifts may occur unexpectedly or evolve over time\cite{liang2025comprehensive}.

A popular approach to TTA is based on test-time normalization (TTN), where batch normalization statistics are modified to match the target data distribution. This method has been demonstrated to be particularly effective for convolutional neural networks (CNNs) \cite{nado2020evaluating, schneider2020improving,vianna2024hybridttn}, and recently has achieved notable success when applied to vision transformers (ViTs) \cite{marsden2023roid, lee2024cmf, lee2024deyo, wang2024otta}. However, TTN implicitly assumes the neuron-level activations approximate a Gaussian distribution, which may not hold for complex, multi-modal distributions encountered in practice. Furthermore, TTN is limited to architectures that employ batch normalization layers (BatchNorm\cite{batchnorm}), making it inapplicable to models using other normalization schemes such as group normalization or layer normalization (GroupNorm\cite{groupnorm} and LayerNorm\cite{layernorm}).
\begin{figure}
    \centering    \includegraphics[width=\linewidth]{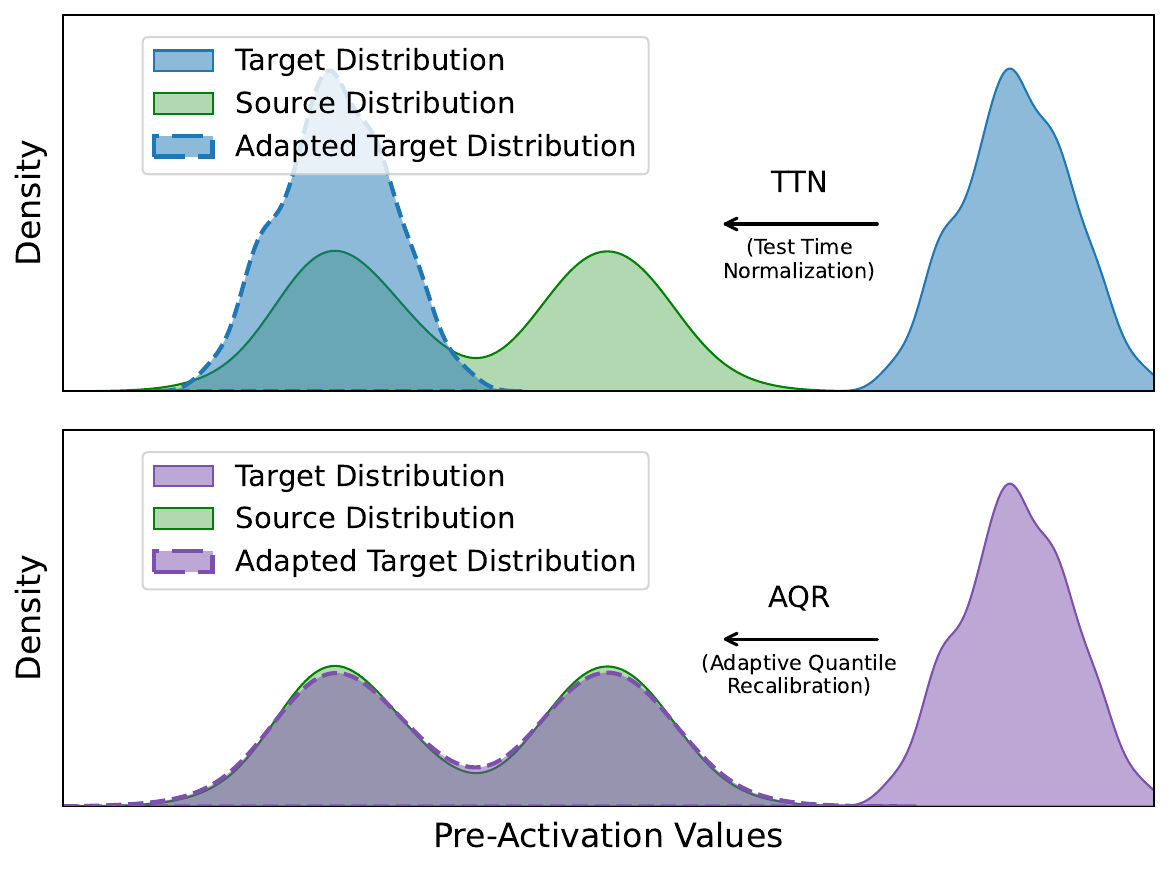}
    \vspace{-9mm}
    \caption{Comparing AQR and TTN in preserving complex distribution shapes at test-time using synthetic data.}
    \vspace{-5mm}
    \label{fig:kde_plots}
\end{figure}


We thus propose Adaptive Quantile Recalibration (AQR), a novel TTA method that aligns the distributions of internal features between source and target domains, without relying on parametric distribution assumptions. Our approach leverages nonparametric quantile-based transformations to map target domain activations to their corresponding source domain distributions on a channel-by-channel basis. Unlike methods that only adjust the mean and variance of pre-activations, AQR captures and preserves the complete shape of pre-activation distributions, making it effective for handling complex distribution patterns commonly found in deep neural networks (Figure \ref{fig:kde_plots}).
A critical advantage of our method is that it does not degrade over time, as we are adapting to source activations that are precomputed and remain fixed throughout testing. Unlike entropy-based methods that continuously update model parameters and can drift toward suboptimal solutions, AQR provides a stable reference point derived from the source domain statistics, ensuring consistent and stable adaptation performance even in challenging test scenarios.
Our key contributions are as follows:
\begin{itemize}
\vspace{-2mm}
\item We propose a novel method that calibrates pre-activations at test-time to align with train-time pre-activations by leveraging statistics computed at the end of model training.
\item  We demonstrate our method's applicability across diverse model architectures, independent of specific types of normalization layers.
\item We identify and address challenges associated with varying batch sizes in computing statistical information and propose strategies for the accurate estimation of distribution tails.
\item  Our experiments on three datasets across four architectures show that our method outperforms current state-of-the-art approaches and shows potential for real-world applications.

\end{itemize}


\section{Related Work}
\label{sec:related}
TTA methods frequently operate by updating the parameters associated with BatchNorm layers in response to covariate shifts in the input distribution\cite{TENT,yuan2023robust,nado2020evaluating,schneider2020improving,wu2024test,lessBN,lim2023ttn}, as is the case of the popular approach TTN \cite{schneider2020improving,nado2020evaluating}. This strategy is closely related to the unsupervised domain adaptation technique AdaBN \cite{li2016revisiting}, and 
has demonstrated effectiveness in mitigating the effects of varying degrees of image corruption. However, a key limitation lies in their dependency on BatchNorm, which restricts their applicability to architectures using this specific normalization scheme. Additionally, BatchNorm is considered a major contributor to instability in standard TTA pipelines \cite{niu_towards_2023}.

The rising use of alternative normalization layers like GroupNorm and LayerNorm necessitates the development of TTA algorithms compatible with various architectures.
Addressing these challenges, sharpness-aware and reliable entropy minimization (SAR) \cite{niu_towards_2023} was developed as an online TTA method that supports all types of normalization layers.

Another popular approach, often combined with TTN, requires adapting the affine parameters of normalization layers using entropy minimization loss, as seen in 
previous research \cite{TENT, niu_towards_2023}.
However, these methods face stability challenges in wild test scenarios. Specifically, they can produce faulty feedback if an incorrect selection of samples is used to calculate the loss and may suffer performance degradation over time.

For easier deployment across multiple architectures, marginal entropy minimization with one test point (MEMO) \cite{zhang_memo_2022} was proposed as an approach needing only the trained model and a single test input. However, it is computationally expensive due to per-sample backpropagation and test-time augmentation, making it unsuitable for latency-sensitive tasks.

Neuron editing \cite{amodio_neuron_editting} addresses the problem of generating transformed versions of data based on the pre- and post-transformation versions observed of a small subset of the available data. Rather than learning distribution-to-distribution mappings, it reframes the problem as learning a general edit function that can be applied to other datasets. The method applies piecewise linear transformations to neuron activations in autoencoder latent spaces, computing percentile-based differences between source and target distributions. 
This nonparametric approach preserves data variability and avoids issues like mode collapse seen in generative models. Inspired by this approach, we adapt their percentile-based transformation strategy to the test-time adaptation setting while making it applicable to diverse neural network architectures independent of specific normalization layers.

\section{Methodology}
\begin{figure*}[!htbp]
    \centering
    \includegraphics[width=\textwidth]{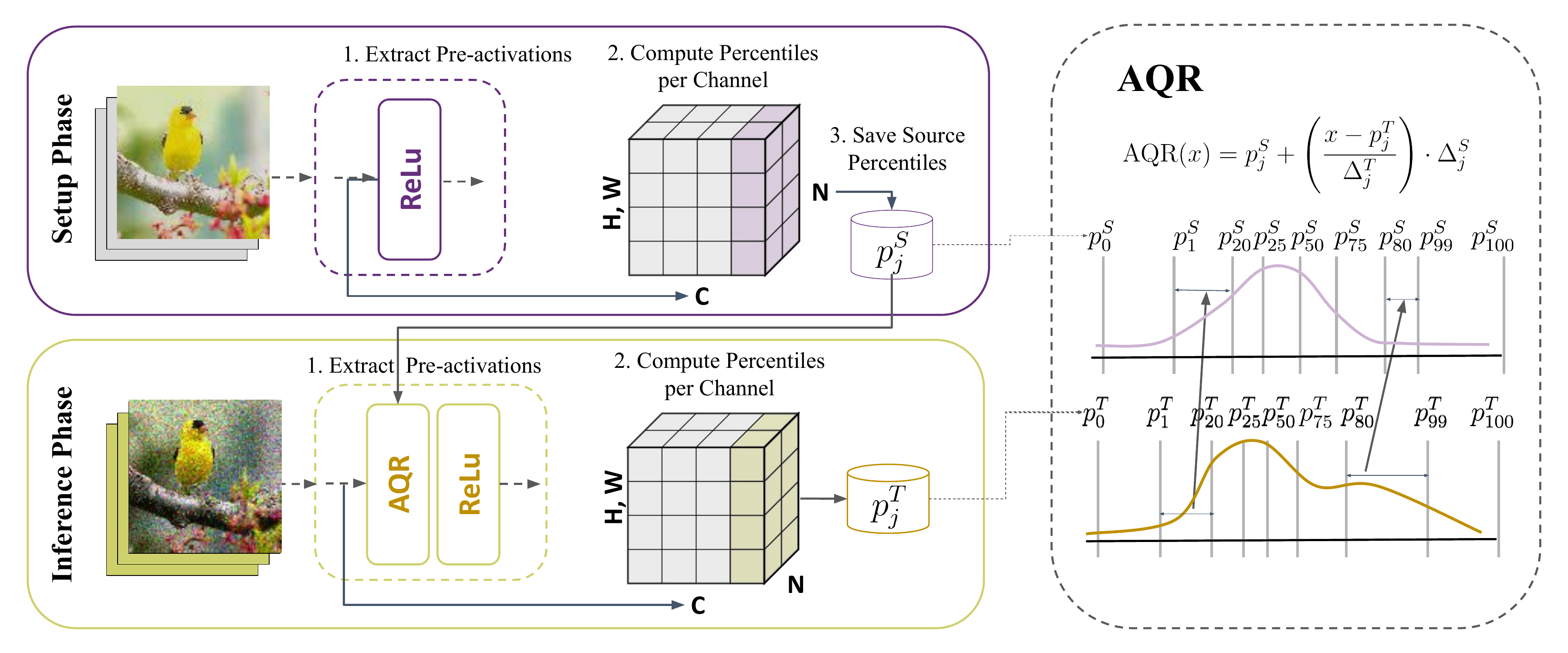}
    \caption{Overview of the Adaptive Quantile Recalibration (AQR) method. During the setup phase (top), source data is processed to extract pre-activations and compute percentiles per channel, which are saved as reference statistics. During inference (bottom), target data pre-activations are similarly processed, and AQR transforms target percentiles to match source percentiles using piecewise linear transformation, enabling distribution alignment without architectural constraints.}
    \label{fig:your-label}
\end{figure*}

In the current work, we propose a TTA method based on aligning the distributions of the intermediate features of a neural network. Our key insight is that distribution shifts between training and testing data manifest as shifts in the distributions of intermediate features of neural networks. By transforming these internal distributions to match those observed during training, we can improve the model's performance on out-of-distribution test data without requiring access to training data or modifying the training process.
Our method consists of two phases: First, in the setup phase, we compute the statistical information of the internal layers of the model when given the source data. Second, in the inference phase, we transform the model's internal pre-activation values to correct for distribution shifts that occur when processing test data.

\subsection{Setup Phase: Source Distribution Statistics}
Let $f_\theta$ denote a neural network with parameters $\theta$ trained on source distribution $P(x)$. After training is complete and before any inference, we perform a one-time setup phase to capture the statistical information of the source distribution. In this phase, we apply the following steps:

1) Process a subset of source/training data $S$ through the trained model.

2) For each layer $l$, and each channel $c$ within that layer, store the pre-activation values denoted as $a_c^l$ (outputs of normalization layers before activation function).

3) Compute percentiles $p_i^S$ where $i \in \{0,1,\dots,100\}$ from the stored pre-activation values $a_c^l$, doing this separately for each channel in each layer. 

These stored percentiles ($p_i^S$) serve as a memory of the distribution characteristics of the model's internal values when processing in-distribution data, and will be used during inference to guide the adaptation process. 

\subsection{Inference Phase: Distribution Alignment}
During inference, when out-of-distribution test samples are processed through the network, the distribution of pre-activation values ($a_c^l$) deviates from what was observed during training. We propose to transform these values to match their training-time distributions.

Our method is agnostic to the specific type of normalization layer used in the network (batch, layer, or group normalization). For each batch of test samples, we:
1) compute percentiles $p_i^T$ of the pre-activation values for each channel,
2) transform these values using a piecewise linear transformation adapted from \cite{amodio_neuron_editting} that we denote \textbf{AQR}. This transformation is applied as follows:
\begin{equation}
\text{AQR}(x) = p_j^S + \left( \dfrac{x - p_j^T}{\Delta_j^T} \right) \cdot \Delta_j^S \quad \text{for } x \in [p_j^T, p_{j+1}^T)
\label{eq:aqr}
\end{equation}

\noindent where, $\Delta_j^T = p_{j+1}^T - p_j^T$ and $x$ represents the pre-activation values of a specific channel and a specific layer, $p_i^T$ represents the $i$-th percentile of the test samples' pre-activation values (computed on-the-fly), and $p_i^S$ represents the $i$-th percentile of the source/training pre-activation values (previously computed during the setup phase). This transformation uses 100 percentile intervals, with $j \in \{0,1,2,...,99\}$ covering the entire distribution range from the 0th to the 100th percentile, and maps the test-time distribution back to the distribution observed during training. This transformation is applied to all channels of a given model.






\subsection{Calibrating the Tail of Distribution} 
The size of the source dataset can affect how accurately the source distribution is estimated. More samples lead to better overall estimation, but can produce extreme values in the distribution tails. Figure \ref{fig:activation_diff_boxplot} demonstrates the instability of tail percentile estimation using small batches. We drew 20 different batches of 128 samples from the source/training distribution and computed percentiles for each batch. For each percentile level, we calculated the deviation from the source/training percentile computed using 10,000 training samples. Each boxplot shows the distribution of these deviations across the 20 batches. The results reveal that tail percentiles show substantial variability: the 0th percentile (minimum) consistently overestimates the true minimum, while the 100th percentile (maximum) consistently underestimates the true maximum. This bias and high variability in tail estimation motivate our tail calibration strategy. Instead of using actual minimum and maximum values of the source data, we estimate the first and last percentiles through sampling. We compute these statistics over a batch of 100, repeat the sampling 1,000 times, and then average the results. This approach provides more reliable estimates of the distribution tails. We evaluate the impact of this strategy in the following section.
\begin{figure}
    \centering
    \includegraphics[width=\linewidth]{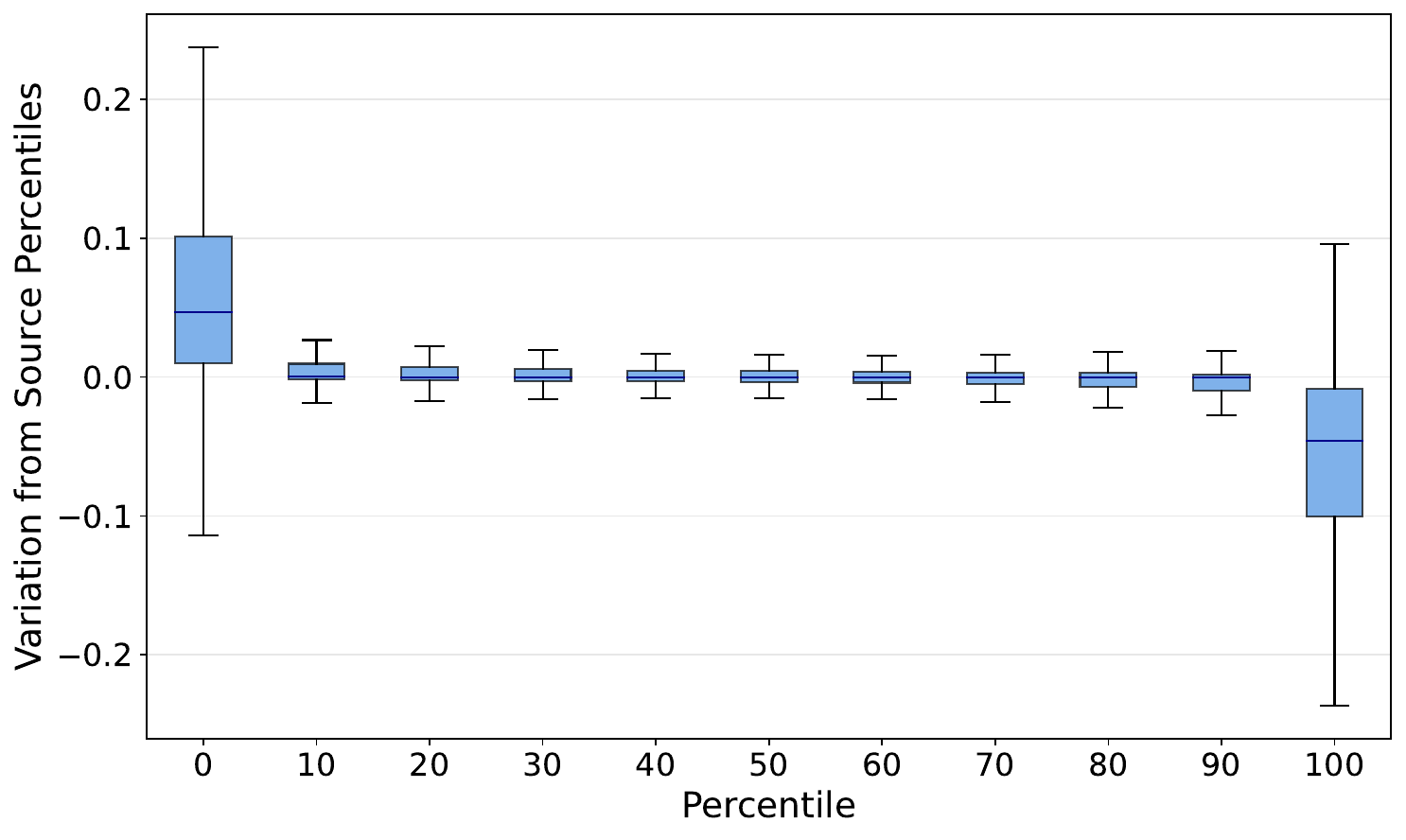}
        \vspace{-9mm}
    \caption{Distribution of deviations between small-batch (128) and reference (10,000) percentiles across 20 trials.}
        \vspace{-6mm}
    \label{fig:activation_diff_boxplot}
\end{figure}
\begin{table*}[htbp]
\centering
\caption{Comparing the performance of test-time adaptation methods on ImageNet-C at corruption severity level 3 and batch size 128. Results show classification accuracy (\%) per corruption type. Bold indicates best performance per model. "NA" denotes the non-adapted (original) model. Standard Error does not exceed 0.27.}
\label{tab:imagenet_compact}
\renewcommand{\arraystretch}{1.3}
\resizebox{\textwidth}{!}{
\LARGE
\begin{tabular}{m{1cm}l|cccc|cccc|cccc|ccccc|cc|L}
\toprule
\multirow{2}{*}{\rotatebox{90}{\textbf{Model}}} & \textbf{Method} & \multicolumn{4}{c|}{\textbf{Noise}} & \multicolumn{4}{c|}{\textbf{Blur}} & \multicolumn{4}{c|}{\textbf{Weather}} & \multicolumn{5}{c|}{\textbf{Distortion}} & \multicolumn{2}{c|}{\textbf{Digital}} & \textbf{Average} \\
 &  & \textbf{Gauss.} & \textbf{Impul.} & \textbf{Shot} & \textbf{Speck.} & \textbf{Defoc.} & \textbf{G.Blur} & \textbf{Motion} & \textbf{Zoom} & \textbf{Bright.} & \textbf{Contr.} & \textbf{Satur.} & \textbf{Fog} & \textbf{Elastic} & \textbf{Frost} & \textbf{Glass} & \textbf{Pixel} & \textbf{Snow} & \textbf{JPEG} & \textbf{Spatt.} & \textbf{Avg} \\
\midrule
\vspace{2mm}
\multirow{5}{*}{\rotatebox{90}{ResNet50 (BN)}} & NA & 27.6 & 25.1 & 25.1 & 31.7 & \textbf{38.0} & 41.6 & 37.7 & 35.2 & 69.6 & 46.0 & 71.4 & 46.6 & 55.6 & 32.1 & 16.9 & 46.2 & 35.2 & 59.3 & 49.4 & 41.6 \\
 & TTN & 45.5 & 44.5 & 43.2 & 47.4 & 37.2 & 42.3 & 50.2 & 51.2 & 72.1 & 64.1 & 73.8 & 62.2 & 66.4 & 41.6 & 32.4 & 64.4 & 47.5 & 63.0 & 59.4 & 53.1 \\
 & TENT & 45.5 & 44.5 & 43.2 & 47.4 & 37.2 & 42.3 & 50.3 & 51.2 & 72.1 & 64.1 & 73.8 & 62.3 & 66.4 & 41.6 & 32.4 & 64.4 & 47.5 & 63.0 & 59.4 & 53.1 \\
 & SAR & 45.6 & 44.6 & 43.3 & 47.5 & 37.4 & \textbf{42.5} & 50.4 & 51.3 & 72.1 & 64.1 & \textbf{73.8} & 62.3 & 66.5 & 41.7 & 32.6 & 64.5 & 47.6 & 63.0 & 59.5 & 53.2 \\
 & AQR & \RC{\textbf{48.3}} & \RC{\textbf{47.2}} & \RC{\textbf{47.4}} & \RC{\textbf{50.2}} & \RC{35.1} & \RC{39.6} & \RC{\textbf{51.3}} & \RC{\textbf{52.1}} & \RC{\textbf{72.1}} & \RC{\textbf{64.8}} & \RC{73.3} & \RC{\textbf{63.1}} & \RC{\textbf{68.0}} & \RC{\textbf{45.5}} & \RC{\textbf{33.8}} & \RC{\textbf{65.0}} & \RC{\textbf{51.0}} & \RC{\textbf{64.6}} & \RC{\textbf{61.5}} & \RC{\textbf{54.4}} \\
\midrule
\vspace{2mm}
\multirow{4}{*}{\rotatebox{90}{ResNet50 (GN)}} & NA & 54.5 & 53.1 & 52.8 & 57.7 & 44.3 & 49.8 & 49.7 & 39.2 & 75.4 & 69.8 & 76.9 & 55.8 & 59.6 & \textbf{54.0} & 21.2 & 59.7 & 54.8 & 66.3 & 63.3 & 55.7 \\
 & TENT & 54.5 & \textbf{53.1} & 52.8 & 57.7 & 44.3 & 49.8 & 49.7 & 39.2 & 75.4 & 69.8 & 76.9 & 55.8 & 59.6 & 54.0 & 21.2 & 59.7 & 54.8 & 66.3 & 63.3 & 55.7 \\
 & SAR & \textbf{54.5} & 53.1 & \textbf{52.9} & \textbf{57.7} & \textbf{44.3} & \textbf{49.8} & 49.7 & 39.2 & \textbf{75.4} & 69.8 & \textbf{76.9} & 56.1 & 59.6 & 54.0 & 21.3 & 59.8 & 54.8 & \textbf{66.3} & 63.2 & 55.7 \\
 & AQR & \RC{50.9} & \RC{48.0} & \RC{48.1} & \RC{50.1} & \RC{41.7} & \RC{45.4} & \RC{\textbf{59.2}} & \RC{\textbf{55.5}} & \RC{74.9} & \RC{\textbf{71.3}} & \RC{76.1} & \RC{\textbf{68.7}} & \RC{\textbf{71.1}} & \RC{51.4} & \RC{\textbf{36.9}} & \RC{\textbf{67.9}} & \RC{\textbf{57.1}} & \RC{64.9} & \RC{\textbf{65.0}} & \RC{\textbf{58.1}} \\
\midrule
\vspace{2mm}
\multirow{4}{*}{\rotatebox{90}{ViT-Base (FT)}} & NA & 62.7 & 62.0 & 60.9 & 63.3 & 51.9 & 54.6 & 58.2 & 45.0 & 72.6 & 76.8 & 75.4 & 71.0 & 67.9 & 34.4 & 37.1 & 69.2 & 45.3 & 67.8 & 64.1 & 60.0 \\
 & TENT & 53.8 & 53.1 & 50.1 & 53.8 & 48.6 & 51.6 & 53.3 & 40.8 & 68.4 & 74.3 & 71.0 & 67.7 & 64.9 & 31.6 & 35.4 & 65.4 & 35.8 & 64.7 & 59.5 & 54.9 \\
 & SAR & 62.7 & 58.5 & 60.3 & 63.3 & 52.0 & 54.7 & 58.3 & 45.1 & 72.6 & 76.8 & 75.4 & 71.0 & 67.3 & 34.5 & 37.2 & 69.0 & 45.6 & 67.7 & 64.1 & 59.8 \\
 & AQR & \RC{\textbf{63.9}} & \RC{\textbf{63.3}} & \RC{\textbf{62.3}} & \RC{\textbf{64.9}} & \RC{\textbf{55.2}} & \RC{\textbf{57.8}} & \RC{\textbf{59.7}} & \RC{\textbf{52.2}} & \RC{\textbf{75.1}} & \RC{\textbf{77.7}} & \RC{\textbf{77.5}} & \RC{\textbf{72.9}} & \RC{\textbf{73.5}} & \RC{\textbf{41.8}} & \RC{\textbf{48.1}} & \RC{\textbf{72.6}} & \RC{\textbf{52.6}} & \RC{\textbf{71.7}} & \RC{\textbf{69.7}} & \RC{\textbf{63.8}} \\
\midrule
\vspace{2mm}
\multirow{4}{*}{\rotatebox{90}{ViT-Base (TS)}} & NA & 39.3 & 37.9 & 36.5 & 43.8 & 42.5 & 46.1 & 48.8 & 40.8 & 64.9 & 66.7 & 67.9 & 56.5 & 67.9 & 30.2 & 37.6 & 68.0 & 31.5 & 64.6 & 53.8 & 49.7 \\
 & TENT & 39.3 & 37.9 & 36.5 & 43.7 & 42.5 & 46.1 & 48.8 & 40.8 & 64.9 & 66.7 & 67.9 & 56.5 & 67.9 & 30.2 & 37.6 & 68.0 & 31.5 & 64.6 & 53.8 & 49.7 \\
 & SAR & 39.4 & 37.9 & 36.5 & 43.8 & 42.5 & 46.1 & 48.8 & 40.9 & 64.9 & 66.7 & 67.9 & 56.5 & 67.9 & 30.3 & 37.6 & 68.0 & 31.6 & 64.6 & 53.8 & 49.8 \\
 & AQR & \RC{\textbf{43.8}} & \RC{\textbf{43.2}} & \RC{\textbf{41.6}} & \RC{\textbf{47.7}} & \RC{\textbf{44.9}} & \RC{\textbf{48.3}} & \RC{\textbf{51.1}} & \RC{\textbf{44.7}} & \RC{\textbf{65.6}} & \RC{\textbf{67.9}} & \RC{\textbf{68.3}} & \RC{\textbf{58.5}} & \RC{\textbf{68.9}} & \RC{\textbf{35.3}} & \RC{\textbf{42.4}} & \RC{\textbf{68.7}} & \RC{\textbf{37.7}} & \RC{\textbf{65.4}} & \RC{\textbf{57.3}} & \RC{\textbf{52.7}} \\

\bottomrule
\end{tabular}
}
\end{table*}

\subsection{Analysis of a One-Hidden-Layer Model}
\label{sec:anlysis_one_hidden_layer}

In this section, we provide a simple theoretical motivation under a simplified architecture and corruption model for advantages of AQR's piecewise-linear transformation. This proof holds under specific assumptions and is intended to provide intuition for how AQR works compared to TTN and similar methods that only update affine parameters. In App.~\ref{app:finite-sample-aqr}, we present a complementary analysis under finite samples and quantile discretization.
\paragraph{Main objective.}
We compare two adaptation strategies, AQR and TTN, which attempt to recover the source hidden representation \(h^S\) from the corrupted test-time \(h^T\). We measure adaptation quality using the total mean squared error (MSE)
\[
\mathrm{MSE}(T) \ := \ \sum_{i=1}^m \mathbb{E}\big[(T_i(h_i^T) - h_i^S)^2\big].
\]
Our goal is to show that under the assumptions below,
\[
\mathrm{MSE}\!\big(T^{\mathrm{AQR}}\big) \ = \ 0,
\
\mathrm{MSE}\!\big(T^{\mathrm{TTN}}\big) \ > \ 0
\ \text{for non-affine} \ k_i .
\]

\paragraph{Setting.}
We consider a one-hidden-layer MLP on the \textbf{source domain}. Let the input be a random vector \(x\in\mathbb{R}^d\) with
\[
x \sim P \ \text{ on } \ \mathbb{R}^d, \qquad \mathrm{supp}(P)=\mathbb{R}^d.
\]
Network parameters are \( W \in \mathbb{R}^{m \times d} \) and \( b \in \mathbb{R}^m \). The pre-activations and post-activations are
\[
a^S = Wx + b \in \mathbb{R}^m, \qquad h^S = \phi(a^S) \in \mathbb{R}^m,
\]
with final prediction \( y_S = w^\top h^S \) for some \( w \in \mathbb{R}^m \).
The activation \(\phi:\mathbb{R}\to\mathbb{R}\) is strictly increasing and continuous (e.g., identity or leaky-ReLU with positive slope), so \(\phi^{-1}\) is well-defined on its image.
Index neurons by \(i\in\{1,\dots,m\}\). For each \(i\), the source hidden \(h_i^S\) has marginal distribution \(P_i\) with continuous density and strictly increasing CDF \(F_{P_i}\).
\paragraph{Corruption model.}
Test-time corruptions are modeled as strictly increasing transformations that, although induced by the input shift, are observed at pre-activations/activations. Concretely, for each \(i\) there exists strictly increasing \(g_i:\mathbb{R}\to\mathbb{R}\) such that
\begin{align*}
a_i^T \ \stackrel{d}{=} \ g_i(a_i^S).
\end{align*}
Since \(\phi\) is strictly increasing, we define
\begin{align*}
k_i \ := \ \phi \circ g_i \circ \phi^{-1},
\end{align*}
which is strictly increasing. Then the corrupted hidden representation satisfies
\begin{align}
h_i^T \ = \ k_i(h_i^S),
\label{eq:corruption_h}
\end{align}
and has target marginal \( Q_i = (k_i)_{\#} P_i \) (pushforward of \(P_i\) through \(k_i\)). If \(F_{Q_i}\) is the CDF of \(Q_i\), then for all \(x\in\mathrm{supp}(Q_i)\),
\begin{equation*}
F_{Q_i}(x) = F_{P_i}\!\big(k_i^{-1}(x)\big).
\label{eq:cdf-pushforward}
\end{equation*}

\paragraph{Comparison of methods.}
\textbf{AQR} uses the exact CDFs to define the quantile transform
\begin{equation*}
T_i^{\mathrm{AQR}}(z) := F_{P_i}^{-1}\!\big(F_{Q_i}(z)\big).
\label{eq:aqr-map}
\end{equation*}
Applying this to \(h_i^T\) gives
\begin{align*}
T_i^{\text{AQR}}(h_i^T)
&= F_{P_i}^{-1}\!\big(F_{Q_i}(h_i^T)\big) \\
&= F_{P_i}^{-1}\!\big(F_{P_i}(k_i^{-1}(h_i^T))\big) \\
&= k_i^{-1}(h_i^T) \\
&= k_i^{-1}\!\big(k_i(h_i^S)\big) \quad \text{by Eq.~\eqref{eq:corruption_h}} \\
&= h_i^S.
\end{align*}

\noindent
\textbf{TTN} applies the affine transform
\[
T_i^{\mathrm{TTN}}(z) \ = \ \mu_i^S \ + \ \sigma_i^{S}\,\frac{z-\mu_i^T}{\sigma_i^T},
\]
where \((\mu_i^S,\sigma_i^S)\) and \((\mu_i^T,\sigma_i^T)\) are the (population) mean and standard deviation of \(P_i\) and \(Q_i\), respectively. This matches first and second moments, and exactly inverts the corruption only when \(k_i\) is affine; for nonlinear \(k_i\), TTN leaves a nonzero distortion.

\paragraph{Result.}
From the AQR derivation above,
\[
\mathrm{MSE}\!\big(T^{\mathrm{AQR}}\big)
= \sum_{i=1}^m \mathbb{E}\big[(T_i^{\mathrm{AQR}}(h_i^T) - h_i^S)^2\big] \ = \ 0.
\]
In contrast,
\[
\mathrm{MSE}\!\big(T^{\mathrm{TTN}}\big) \ > \ 0
\quad \text{whenever some } k_i \text{ is non-affine}.
\]
Therefore, AQR achieves perfect recovery under these idealized conditions, whereas TTN cannot unless each \(k_i\) is affine. A complementary finite-sample/discretization analysis is provided in App.~\ref{app:finite-sample-aqr}.

\paragraph{Remark (MSE aggregation).}
We sum MSE across neurons; equivalently, one may average by \(m\) or write \(\mathbb{E}\!\left[\|T(h^T)-h^S\|_2^2\right]\). The conclusions are unchanged.

\section{Experiments and Results}

\subsection{Datasets and Models}

We evaluate our methods on CIFAR-10, CIFAR-100, and ImageNet-1K together with their corresponding corruption benchmarks, CIFAR-10-C, CIFAR-100-C, and ImageNet-C \cite{krizhevsky2009learning,hendrycks2019benchmarking}. Each corruption suite contains 19 corruption types, and each type is provided at five severity levels. We report results at severity levels 1, 3, and 5. CIFAR-10 and CIFAR-100 each contain 50{,}000 training images and 10{,}000 test images at a resolution of $32 \times 32$, with 10 and 100 classes respectively. ImageNet-1K contains 1{,}280{,}000 training images and 50{,}000 validation images at a resolution of $224 \times 224$ across 1{,}000 classes.

We evaluate four architecture families per dataset: ResNets with BatchNormalization (BN) or GroupNormalization (GN), and Vision Transformers (ViT) with LayerNormalization (LN). For ViTs, we consider two regimes: trained from scratch (TS) on the target dataset and fine-tuned (FT) after pre-training on a larger corpus. All fine-tuned ViTs are pre-trained on ImageNet-21K and then fine-tuned on the dataset at hand. We denote ViT-Base-patch16-224 as ViT-Base and ViT-patch4-32 as ViT-Small.
Below we list, for each dataset, the specific models and the ViT training regime (TS/FT).
\begin{itemize}
  \item \textbf{CIFAR-10:} ResNet-18 (BN); ResNet-26 (GN); ViT-Small (TS); ViT-Base (FT).
  \item \textbf{CIFAR-100:} ResNet-50 (BN); ResNet-50 (GN); ViT-Small (TS); ViT-Base (FT).
  \item \textbf{ImageNet-1K:} ResNet-50 (BN); ResNet-50 (GN); ViT-Base (trained from scratch with Sharpness Aware Minimization); ViT-Base (FT).
\end{itemize}
Each set of experiments is evaluated over three random seeds to ensure statistical significance.

We obtain pre-trained ResNet-26 (GN) weights from \cite{zhang_memo_2022} and ImageNet models from the PyTorch model zoo and the \texttt{timm} library \cite{rw2019timm}. ViT-Small models are trained from scratch using the approach described in \cite{yoshioka2024visiontransformers}.
For ResNet models on CIFAR-100, we used pre-trained ResNet-50 (BN) and ResNet-50 (GN) models and fine-tuned them on CIFAR-100, achieving test accuracies of 78.3\% and 72.6\% following the training methodology of \cite{kuangliu2020pytorchcifar}. 
For the ViT-B (FT) models, images were resized to 224×224. We fine-tuned these models that were pre-trained on ImageNet21K, reaching test accuracies of 98.6\% for CIFAR-10 and 90.5\% for CIFAR-100.

\subsection{Experiment Setup}
For experiments with AQR, we used 10,000 training samples from each dataset to estimate the source percentiles. These statistics are frozen for all test-time evaluations.
\label{sec:vit_layers}
We experimented with applying AQR transformation throughout the network as well as in just the top half. For more complex datasets (Imagenet) with ViT, we found it was beneficial only to include it in the top half. See ablations in Appendix \ref{app:ablation}. We hypothesize that for ViT, the layer normalization is placed at the beginning of the block without influencing the residual stream. Therefore, placing AQR after layer normalization can leave the residual stream unadapted. This could pass noisy input to subsequent layers through the residual stream, even with AQR placed within the block; thus, limiting the use of AQR towards the top of the network can be beneficial. 

\textbf{Episodic Setting of TTA:}
We compare AQR to TTN, TENT, and SAR, along with the unadapted source models. Since AQR is designed for stateless test-time use, we evaluate all methods in an \textbf{offline} (episodic) mode. Specifically, the inference on each batch is independent; This is an ideal setting for when distribution shift is highly variable.

For TENT and SAR, we reset model weights after every batch to their pre-trained values. Within each batch, a single forward–backward step is used for adaptation, followed by a second forward pass for evaluation. This approach ensures these parameter-updating methods have the opportunity to adapt before evaluation. This explains why in some cases, the models that have been adapted with SAR or TENT perform worse than the model that has not been adapted. TTN is applied as originally proposed. However, TTN is only applicable to model architectures that use BatchNorm, therefore TTN is excluded from experiments with ViTs or with ResNets that use GroupNorm.

\subsection{Main Results}

Our experimental evaluation shows the effectiveness of AQR across multiple architectures, datasets, and corruption severities. On the challenging ImageNet-C dataset, Table~\ref{tab:imagenet_compact} highlights the key ImageNet-C experiments (severity 3, batch size 128), where AQR consistently surpasses TTN, TENT, and SAR across corruption types and architectures. The detailed experimental results, including comprehensive per-dataset summaries across all batch sizes and severity levels, are provided in the Appendix \ref{app:detailed_tables} (Tables \ref{tab:cifar10_summary}, \ref{tab:cifar100_summary}, \ref{tab:imagenet_summary}).

\textbf{Cross-Dataset Performance}
Our method consistently outperforms existing approaches across all three benchmark datasets. As illustrated in Figure \ref{fig:severity_comparison}, AQR's advantage increases substantially with corruption severity, demonstrating its robustness to challenging test conditions. At severity level 1 (mild corruptions), AQR maintains competitive performance with existing methods. However, as corruption intensity increases to severity 3 and 5, AQR provides increasing improvements over baselines. Since TTN is not applicable to all architectures, it is omitted from this figure; a corresponding plot limited to ResNet models with BatchNorm is provided in the Appendix (Figure \ref{fig:severity_bn}).

\textbf{Architecture-Specific Analysis}
Figure \ref{fig:architecture_comparison} reveals that the effectiveness of AQR spans diverse architectural designs. For Vision Transformers, which do not employ batch normalization layers, AQR provides improvements through its quantile-based normalization approach.
ResNet architectures benefit significantly from AQR across both batch normalization and group normalization variants.

\textbf{Batch Size Effects}
Our analysis reveals that AQR's advantages are particularly pronounced with larger batch sizes, which provide more robust estimation of quantiles for the incoming batch. At batch size 512, the performance gaps between AQR and baselines are consistently larger than at batch size 128 across all datasets. 

This observation adds more evidence for the need to use the tail calibration strategy, as the batch size gets smaller and insufficient for estimating the target distribution tails.

\textbf{Robustness Across Corruption Types} The per-corruption analysis in Table \ref{tab:imagenet_compact} shows AQR’s consistent effectiveness across diverse corruption categories. It performs particularly well on noise-based corruptions (Gaussian, impulse, shot noise) and distortion-based ones (elastic transform, pixelate). For digital corruptions (JPEG compression, spatter) and weather distortions (fog, brightness, contrast), AQR remains competitive.

\begin{figure*}[t]
    \centering
    \includegraphics[width=1.02\linewidth]{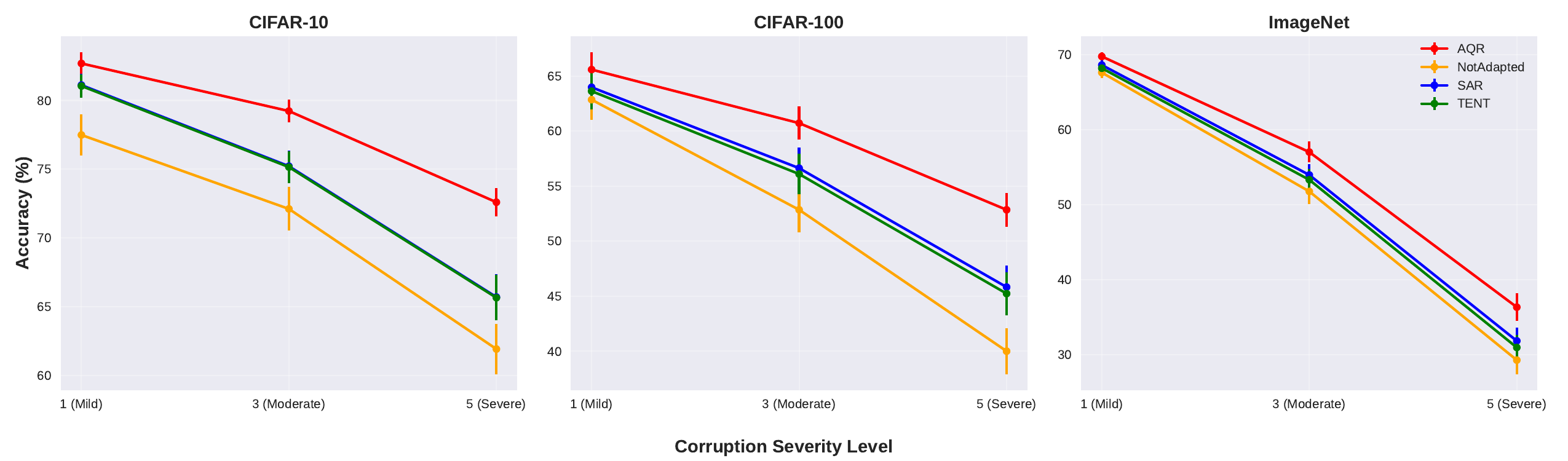}
    \caption{Performance comparison across corruption severity levels. AQR consistently outperforms baseline methods on all datasets, with larger performance gains at higher severities. Results averaged across all corruption types, batch sizes, and architectures. Error bars represent the standard error of the mean for different experimental conditions.}
    \label{fig:severity_comparison}
\end{figure*}

\subsection{Ablation on Mechanisms to Calibrate the Tail}
\label{sec:tails}

We conducted an ablation study to evaluate different strategies for calibrating extreme tails. The standard AQR method serves as our baseline approach. 

\textbf{Average Sample Tails.} We tested our proposed enhancement, AQR with the average of samples of tails, which uses the sampling technique described in the previous section to better estimate extreme percentiles. Specifically, we sample batches of 100 points, compute the minimum ($p_0$) and maximum ($p_100$) for each batch, repeat this 1{,}000 times, and average the results to obtain stable tail estimates. Subsequently, at inference time, these estimated values will be used to perform the recalibration.

\textbf{AQR without Tail Adaptation.} We also explored a simpler alternative: AQR with no tail adaptation, which we denote \textit{Not Calibrated}. Since the extreme ends ($[p_0,p_1]$ and $[p_{99},p_{100}]$) contain only 2\% of the data, we tested whether simply not adapting these regions would be effective. Expressed as 
\begin{equation}
\text{AQR}(x) = 
\begin{cases} 
x & x < p_1^T \\
x & x \geq p_{99}^T
\end{cases}.
\label{eq:na}
\end{equation}
This approach leaves values below the first percentile or above the 99th percentile unchanged.

\textbf{Clipping.} 
 This approach implements simple thresholding. Expressed as
 \begin{equation}
\text{AQR}(x) = 
\begin{cases} 
p_1^T & x < p_1^T \\
 p_{99}^T  & x \geq p_{99}^T 
\end{cases},
\label{eq:clipping}
\end{equation}
it sets extreme values to fixed boundaries - values below the 1st percentile are set exactly to the 1st percentile value, and values above the 99th percentile are set to the 99th percentile value. 

\textbf{Gaussian Estimation}. Another approach of calibrating the tails, which we call \textit{Gaussian Estimation}, assumes both source and target data follow normal distributions. Instead of using unreliable minimum and maximum values from small batches, it fits Gaussian curves to the data. It then uses these fitted curves to predict what the true $0^{th}$ and $100^{th}$ percentiles should theoretically be. A complete mapping formula of this strategy is provided in Appendix~\ref{app:gaussian-tail}.
\textbf{Interval Estimation}. This approach aims to estimate the magnitude of extreme intervals ($\Delta_0^T$  and $\Delta_{99}^S$). Rather than relying on percentile intervals for scaling transformations, it uses the standard deviation of the distribution. The \textit{interval estimation} can be expressed as follows
\begin{table}
\centering
\caption{Classification accuracy (\%) of different tail calibration strategies using ResNet50 on various ImageNet datasets. Results are averaged over 3 random seeds with the best results in bold.}
\label{tab:resnet50_handle_tail}
\setlength{\tabcolsep}{4pt}
\begin{tabular}{l|ccc|ccc}
\toprule
\multirow{2}{*}{\textbf{Tail Calibration Strategy}} & \multicolumn{2}{c}{\textbf{Batch Size}} \\
\cmidrule(lr){2-3}
 & 128 & 512 \\
\midrule
AQR (standard) & 30.8±16.3 & 33.7±16.3  \\
Average Sample Tails & \textbf{33.7±16.6} & \textbf{34.6±16.1}\\
Gaussian Estimation & 33.6±16.6 & 33.5±16.1 \\
Not Calibrated & 29.7±16.6 & 33.3±16.3 \\
Interval Estimation & 29.3±15.6 & 30.8±15.2\\
Clipping & 3.4±4.8 & 3.6±4.7\\
\bottomrule
\end{tabular}
\end{table}

\begin{equation}
\text{AQR}(x) = 
\begin{cases} 
\left( \dfrac{a - p_0^T}{std(X)} \cdot \left( \gamma \right) \right) + p_0^S & x < p_1^T \\
\left( \dfrac{a - p_{99}^T}{std(X)} \cdot \left( \gamma \right) \right) + p_{99}^S & x \geq p_{99}^T
\end{cases},
\label{eq:interval}
\end{equation}
where $X$ is all points in a specific channel of a specific layer of a batch at test time and ${std(X)}$ is the standard deviation of $X$. The idea is that standard deviation provides a more stable measure of data spread and applies the remapping rule of Eq.~\ref{eq:interval}.

\begin{figure*}[!htbp]
    \centering
    \includegraphics[width=\textwidth]{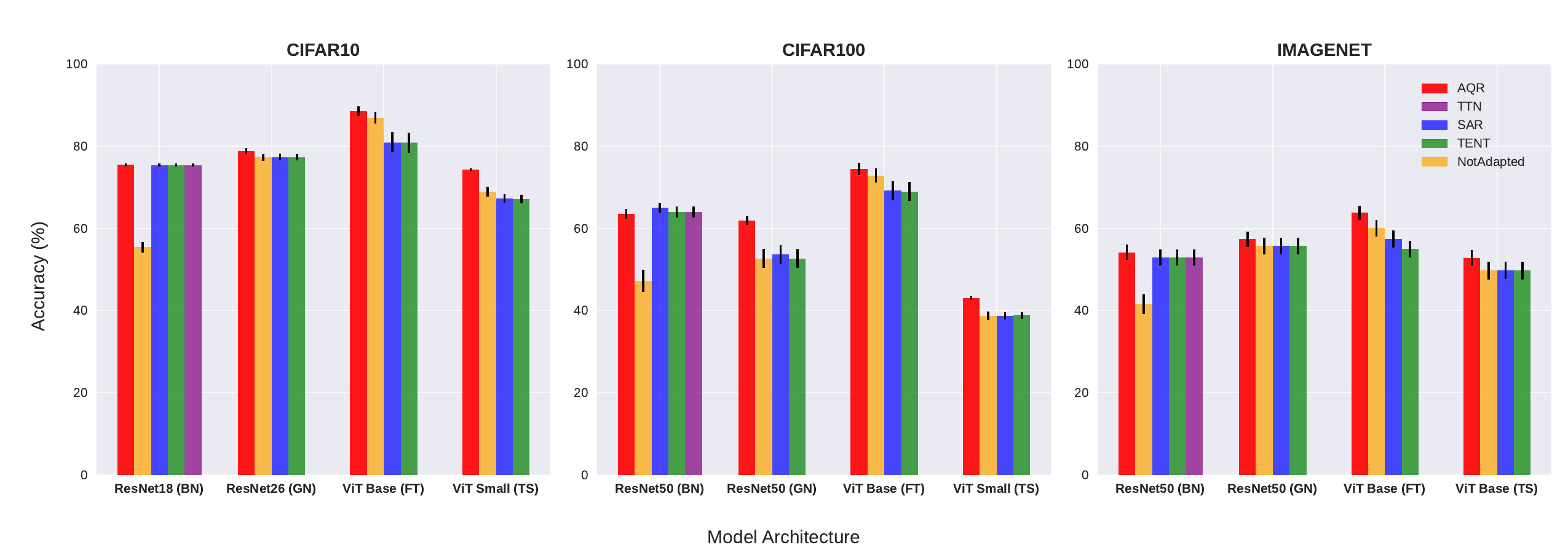}
    \caption{Architecture-specific performance comparison at corruption severity level 3. AQR demonstrates consistent improvements across diverse architectures on three datasets (CIFAR-10-C, CIFAR-100-C, ImageNet-C), including ResNets with different normalization schemes (BN, GN) and ViTs (LN). Error bars represent standard error across corruption types and batch sizes}
    \label{fig:architecture_comparison}
\end{figure*}

\begin{table*}[!htbp]
\caption{Impact of percentile granularity on AQR performance. Comparison between using 101 percentiles (standard AQR) versus 11 percentiles on ResNet-50 (BN) on ImageNet-C. }
\centering
\begin{tabular}{l|cc|cc}
\toprule
\multirow{2}{*}{\textbf{Method}} & \multicolumn{2}{c|}{\textbf{Batch Size = 128}} & \multicolumn{2}{c}{\textbf{Batch Size = 512}} \\
\cmidrule(lr){2-3} \cmidrule(lr){4-5}
 & Severity 3 & Severity 5 & Severity 3 & Severity 5 \\
\midrule
AQR & 53.92 ± 11.79 & 33.72 ± 16.33 & 54.40 ± 11.69 & 34.19 ± 16.37 \\
AQR (10 qds) & 46.37 ± 13.06 & 27.00 ± 16.07 & 50.68 ± 12.54 & 29.77 ± 16.49 \\
\bottomrule
\end{tabular}
\label{tab:aqr_results}
\end{table*}

Table \ref{tab:resnet50_handle_tail} reports classification accuracies for several tail‑calibration strategies evaluated with ResNet‑50 (BN) on the ImageNet-C dataset, across two batch sizes on all corruption types. The results demonstrate that precise modeling of extreme percentiles is essential for robustness. \enquote{Average Sample Tails} attains the highest accuracy at both batch sizes, marginally outperforming \enquote{Gaussian Estimation} and considerably exceeding the standard AQR baseline; this gap is most pronounced at the smaller batch size (128), where tail statistics are most volatile. \enquote{Interval Estimation} offers only a modest benefit, whereas aggressive \enquote{Clipping} severely degrades performance. Collectively, these findings confirm that nuanced calibrating of the values at the tails of the distribution is required and suggest \enquote{Average Sample Tails} as the most dependable approach.

\subsection{Granularity of Percentiles}
We also performed an ablation study to investigate how the granularity of the percentiles affects AQR performance. To this end, we computed fewer percentiles at setup phase to understand the trade-off between computational complexity and the performance of adaptation. Specifically, we compared using 11 percentiles ($p_0,p_{10},p_{20},\dots,p_{100}$) and our standard 101 percentiles ($p_0,p_{1},p_{2},\dots,p_{100}$) on ResNet50 (BN) with ImageNet-C. The results show that finer granularity leads to better performance, with 101 percentiles achieving a much better accuracy. 

\section{Conclusion}
In this study, we introduced Adaptive Quantile Recalibration (AQR), a novel test-time adaptation approach that aligns the distributions of internal features between source and target domains through nonparametric quantile-based transformations. Our approach offers several key advantages: (1) it captures the complete shape of activation distributions rather than just mean and variance, enabling more effective adaptation for complex distribution patterns; (2) our tail calibration strategy effectively handles the challenges of estimating distribution extremes with varying batch sizes; (3) it maintains stability during extended test sessions by using fixed source distribution statistics as reference points. Experiments on CIFAR-10-C, CIFAR-100-C and ImageNet-C across multiple architectures demonstrate that AQR outperforms state-of-the-art TTA methods.
However, while AQR provides stability through fixed reference statistics, this design choice means it processes batches independently rather than accumulating knowledge across sequential batches, which could potentially enhance adaptation in some online scenarios. Future work could explore combining AQR with other TTA methods and extending AQR to online settings while preserving its stability advantages.

\section*{Acknowledgment}
We acknowledge funding from FRQNT-NOVA grant 

\appendix
\newtheorem{theorem}{Theorem}
{\small
\bibliographystyle{ieee_fullname}
\bibliography{egbib}
}

\newpage
\appendix
\onecolumn
\label{app:details}



\newpage
\newtheorem{lemma}{Lemma}
\section{Detailed Results}
\label{app:detailed_tables}
Here we present detailed experimental results in three tables for each dataset (CIFAR10-C, CIFAR100-C and ImageNet-C). The results for AQR and its baselines are specified for two different batch sizes and three corruption severity levels.

\begin{table*}[htbp]
\centering
\footnotesize
\caption{Summary of CIFAR10 experiments. Average accuracy (\%) for each batch size and severity level. Values show mean±std across corruptions and seeds.}
\label{tab:cifar10_summary}
\resizebox{0.7\textwidth}{!}{%
\begin{tabular}{l|ccc|ccc}
\toprule
\textbf{Method} & \multicolumn{3}{c|}{\textbf{Batch Size 128}} & \multicolumn{3}{c}{\textbf{Batch Size 512}} \\
 & \textbf{Sev 1} & \textbf{Sev 3} & \textbf{Sev 5} & \textbf{Sev 1} & \textbf{Sev 3} & \textbf{Sev 5} \\
\midrule
ResNet18 (BN) & 59.5{\scriptsize ±4.4} & 55.4{\scriptsize ±8.0} & 46.8{\scriptsize ±8.3} & 59.5{\scriptsize ±4.4} & 55.4{\scriptsize ±8.0} & 46.8{\scriptsize ±8.3} \\
\hspace{0.5em}TTN & \textbf{78.0}{\scriptsize ±2.0} & 75.2{\scriptsize ±2.4} & 70.2{\scriptsize ±4.4} & \textbf{78.3}{\scriptsize ±2.0} & 75.5{\scriptsize ±2.4} & 70.6{\scriptsize ±4.4} \\
\hspace{0.5em}TENT & \textbf{78.0}{\scriptsize ±2.0} & 75.2{\scriptsize ±2.4} & 70.2{\scriptsize ±4.4} & \textbf{78.3}{\scriptsize ±2.0} & 75.5{\scriptsize ±2.4} & 70.6{\scriptsize ±4.4} \\
\hspace{0.5em}SAR & \textbf{78.0}{\scriptsize ±2.0} & 75.2{\scriptsize ±2.4} & 70.2{\scriptsize ±4.4} & \textbf{78.3}{\scriptsize ±2.0} & 75.5{\scriptsize ±2.4} & 70.6{\scriptsize ±4.4} \\
\hspace{0.5em}AQR & 77.9{\scriptsize ±2.0} & \textbf{75.3}{\scriptsize ±2.4} & \textbf{70.6}{\scriptsize ±4.2} & \textbf{78.3}{\scriptsize ±2.0} & \textbf{75.6}{\scriptsize ±2.4} & \textbf{70.9}{\scriptsize ±4.2} \\
\midrule
ResNet26 (GN) & \textbf{81.8}{\scriptsize ±4.3} & 77.3{\scriptsize ±5.0} & 68.2{\scriptsize ±7.0} & 81.8{\scriptsize ±4.3} & 77.3{\scriptsize ±5.0} & 68.2{\scriptsize ±7.0} \\
\hspace{0.5em}TENT & \textbf{81.8}{\scriptsize ±4.3} & 77.3{\scriptsize ±5.0} & 68.2{\scriptsize ±7.0} & 81.8{\scriptsize ±4.3} & 77.3{\scriptsize ±5.0} & 68.2{\scriptsize ±7.0} \\
\hspace{0.5em}SAR & \textbf{81.8}{\scriptsize ±4.3} & 77.3{\scriptsize ±5.0} & 68.2{\scriptsize ±7.0} & 81.8{\scriptsize ±4.3} & 77.3{\scriptsize ±5.0} & 68.2{\scriptsize ±7.0} \\
\hspace{0.5em}AQR & 81.7{\scriptsize ±3.7} & \textbf{78.6}{\scriptsize ±4.4} & \textbf{73.2}{\scriptsize ±5.7} & \textbf{81.9}{\scriptsize ±3.6} & \textbf{78.9}{\scriptsize ±4.4} & \textbf{73.6}{\scriptsize ±5.6} \\
\midrule
ViT Base (FT) & 92.9{\scriptsize ±5.1} & 86.9{\scriptsize ±8.8} & 74.0{\scriptsize ±15.8} & 92.9{\scriptsize ±5.1} & 86.9{\scriptsize ±8.8} & 74.0{\scriptsize ±15.8} \\
\hspace{0.5em}TENT & 89.9{\scriptsize ±6.9} & 80.8{\scriptsize ±15.1} & 66.4{\scriptsize ±22.4} & 90.0{\scriptsize ±6.9} & 80.8{\scriptsize ±15.1} & 66.4{\scriptsize ±22.4} \\
\hspace{0.5em}SAR & 90.0{\scriptsize ±6.9} & 80.9{\scriptsize ±15.0} & 66.5{\scriptsize ±22.4} & 90.0{\scriptsize ±6.9} & 80.9{\scriptsize ±15.0} & 66.5{\scriptsize ±22.4} \\
\hspace{0.5em}AQR & \textbf{93.2}{\scriptsize ±3.9} & \textbf{88.4}{\scriptsize ±7.0} & \textbf{78.6}{\scriptsize ±11.2} & \textbf{93.4}{\scriptsize ±3.9} & \textbf{88.6}{\scriptsize ±7.0} & \textbf{78.9}{\scriptsize ±11.3} \\
\midrule
ViT Small (TS) & 75.7{\scriptsize ±3.5} & 68.9{\scriptsize ±7.4} & 58.7{\scriptsize ±14.9} & 75.7{\scriptsize ±3.5} & 68.9{\scriptsize ±7.4} & 58.7{\scriptsize ±14.9} \\
\hspace{0.5em}TENT & 74.3{\scriptsize ±3.1} & 67.2{\scriptsize ±6.5} & 57.7{\scriptsize ±12.9} & 74.3{\scriptsize ±3.2} & 67.1{\scriptsize ±6.6} & 57.6{\scriptsize ±12.9} \\
\hspace{0.5em}SAR & 74.5{\scriptsize ±3.1} & 67.3{\scriptsize ±6.6} & 57.7{\scriptsize ±13.0} & 74.6{\scriptsize ±3.2} & 67.3{\scriptsize ±6.6} & 57.7{\scriptsize ±13.0} \\
\hspace{0.5em}AQR & \textbf{77.4}{\scriptsize ±1.7} & \textbf{74.0}{\scriptsize ±2.5} & \textbf{67.3}{\scriptsize ±9.2} & \textbf{77.8}{\scriptsize ±1.7} & \textbf{74.4}{\scriptsize ±2.5} & \textbf{67.7}{\scriptsize ±9.3} \\
\bottomrule
\end{tabular}%
}
\end{table*}

\begin{table*}[htbp]
\centering
\footnotesize
\caption{Summary of CIFAR100 experiments. Average accuracy (\%) for each batch size and severity level. Values show mean±std across corruptions and seeds.}
\label{tab:cifar100_summary}
\resizebox{0.7\textwidth}{!}{%
\begin{tabular}{l|ccc|ccc}
\toprule
\textbf{Method} & \multicolumn{3}{c|}{\textbf{Batch Size 128}} & \multicolumn{3}{c}{\textbf{Batch Size 512}} \\
 & \textbf{Sev 1} & \textbf{Sev 3} & \textbf{Sev 5} & \textbf{Sev 1} & \textbf{Sev 3} & \textbf{Sev 5} \\
\midrule
ResNet50 (BN) & 63.3{\scriptsize ±13.6} & 47.2{\scriptsize ±16.4} & 31.1{\scriptsize ±17.3} & 63.3{\scriptsize ±13.6} & 47.2{\scriptsize ±16.4} & 31.1{\scriptsize ±17.3} \\
\hspace{0.5em}TTN & 68.5{\scriptsize ±4.9} & 63.5{\scriptsize ±8.0} & 56.4{\scriptsize ±10.1} & 69.5{\scriptsize ±4.9} & 64.5{\scriptsize ±8.0} & 57.2{\scriptsize ±10.2} \\
\hspace{0.5em}TENT & 68.4{\scriptsize ±4.8} & 63.5{\scriptsize ±7.8} & 56.3{\scriptsize ±10.0} & 69.5{\scriptsize ±4.8} & 64.5{\scriptsize ±8.0} & 57.2{\scriptsize ±10.1} \\
\hspace{0.5em}SAR & \textbf{69.7}{\scriptsize ±4.4} & \textbf{65.2}{\scriptsize ±6.9} & \textbf{58.7}{\scriptsize ±8.6} & \textbf{69.7}{\scriptsize ±4.8} & \textbf{64.8}{\scriptsize ±7.8} & 57.6{\scriptsize ±9.9} \\
\hspace{0.5em}AQR & 67.2{\scriptsize ±4.6} & 62.4{\scriptsize ±7.0} & 55.7{\scriptsize ±9.0} & 69.3{\scriptsize ±4.6} & 64.6{\scriptsize ±7.0} & \textbf{57.7}{\scriptsize ±9.2} \\
\midrule
ResNet50 (GN) & 62.1{\scriptsize ±10.1} & 52.6{\scriptsize ±14.2} & 39.5{\scriptsize ±14.4} & 62.1{\scriptsize ±10.1} & 52.6{\scriptsize ±14.2} & 39.5{\scriptsize ±14.4} \\
\hspace{0.5em}TENT & 62.2{\scriptsize ±10.1} & 52.7{\scriptsize ±14.2} & 39.6{\scriptsize ±14.4} & 62.2{\scriptsize ±10.1} & 52.7{\scriptsize ±14.2} & 39.6{\scriptsize ±14.4} \\
\hspace{0.5em}SAR & 63.3{\scriptsize ±9.5} & 54.2{\scriptsize ±14.3} & 40.7{\scriptsize ±15.5} & 62.4{\scriptsize ±10.0} & 53.0{\scriptsize ±14.1} & 39.8{\scriptsize ±14.5} \\
\hspace{0.5em}AQR & \textbf{66.0}{\scriptsize ±4.2} & \textbf{61.5}{\scriptsize ±6.5} & \textbf{54.8}{\scriptsize ±8.2} & \textbf{66.9}{\scriptsize ±4.1} & \textbf{62.4}{\scriptsize ±6.5} & \textbf{55.8}{\scriptsize ±8.2} \\
\midrule
ViT Base (FT) & 81.2{\scriptsize ±7.6} & 72.8{\scriptsize ±10.5} & 58.3{\scriptsize ±14.6} & 81.2{\scriptsize ±7.6} & 72.8{\scriptsize ±10.5} & 58.3{\scriptsize ±14.6} \\
\hspace{0.5em}TENT & 78.8{\scriptsize ±9.2} & 68.9{\scriptsize ±14.1} & 53.3{\scriptsize ±17.9} & 78.9{\scriptsize ±9.2} & 69.0{\scriptsize ±14.1} & 53.4{\scriptsize ±18.0} \\
\hspace{0.5em}SAR & 79.0{\scriptsize ±9.1} & 69.3{\scriptsize ±13.8} & 53.8{\scriptsize ±17.5} & 78.9{\scriptsize ±9.1} & 69.2{\scriptsize ±13.9} & 53.7{\scriptsize ±17.6} \\
\hspace{0.5em}AQR & \textbf{81.6}{\scriptsize ±6.3} & \textbf{74.3}{\scriptsize ±9.3} & \textbf{61.6}{\scriptsize ±12.7} & \textbf{81.8}{\scriptsize ±6.3} & \textbf{74.5}{\scriptsize ±9.2} & \textbf{61.8}{\scriptsize ±12.7} \\
\midrule
ViT Small (TS) & 44.8{\scriptsize ±3.9} & 38.7{\scriptsize ±6.2} & 31.1{\scriptsize ±10.3} & 44.8{\scriptsize ±3.9} & 38.7{\scriptsize ±6.2} & 31.1{\scriptsize ±10.3} \\
\hspace{0.5em}TENT & 44.6{\scriptsize ±3.5} & 38.8{\scriptsize ±5.4} & 31.2{\scriptsize ±10.4} & 44.5{\scriptsize ±3.6} & 38.8{\scriptsize ±5.3} & 31.2{\scriptsize ±10.4} \\
\hspace{0.5em}SAR & 44.5{\scriptsize ±3.6} & 38.7{\scriptsize ±5.4} & 31.1{\scriptsize ±10.3} & 44.4{\scriptsize ±3.5} & 38.7{\scriptsize ±5.3} & 31.1{\scriptsize ±10.3} \\
\hspace{0.5em}AQR & \textbf{45.8}{\scriptsize ±2.4} & \textbf{42.9}{\scriptsize ±3.1} & \textbf{37.6}{\scriptsize ±8.0} & \textbf{46.1}{\scriptsize ±2.4} & \textbf{43.2}{\scriptsize ±3.1} & \textbf{37.9}{\scriptsize ±8.1} \\
\bottomrule
\end{tabular}%
}
\end{table*}
\clearpage
\begin{table*}[htbp]
\centering
\footnotesize
\caption{Summary of ImageNet experiments. Average accuracy (\%) for each batch size and severity level. Values show mean±std across corruptions and seeds.}
\label{tab:imagenet_summary}
\resizebox{0.7\textwidth}{!}{%
\begin{tabular}{l|ccc|ccc}
\toprule
\textbf{Method} & \multicolumn{3}{c|}{\textbf{Batch Size 128}} & \multicolumn{3}{c}{\textbf{Batch Size 512}} \\
 & \textbf{Sev 1} & \textbf{Sev 3} & \textbf{Sev 5} & \textbf{Sev 1} & \textbf{Sev 3} & \textbf{Sev 5} \\
\midrule
ResNet50 (BN) & 61.9{\scriptsize ±6.6} & 41.6{\scriptsize ±14.6} & 19.4{\scriptsize ±14.4} & 61.9{\scriptsize ±6.6} & 41.6{\scriptsize ±14.6} & 19.4{\scriptsize ±14.4} \\
\hspace{0.5em}TTN & 67.3{\scriptsize ±4.4} & 52.7{\scriptsize ±12.0} & 32.5{\scriptsize ±16.0} & 67.7{\scriptsize ±4.4} & 53.1{\scriptsize ±12.0} & 32.9{\scriptsize ±16.1} \\
\hspace{0.5em}TENT & 67.4{\scriptsize ±4.4} & 52.7{\scriptsize ±12.0} & 32.6{\scriptsize ±16.0} & 67.7{\scriptsize ±4.4} & 53.1{\scriptsize ±11.9} & 32.9{\scriptsize ±16.1} \\
\hspace{0.5em}SAR & 67.4{\scriptsize ±4.4} & 52.8{\scriptsize ±11.9} & 32.7{\scriptsize ±16.0} & 67.7{\scriptsize ±4.4} & 53.2{\scriptsize ±11.9} & 33.0{\scriptsize ±16.1} \\
\hspace{0.5em}AQR & \textbf{67.4}{\scriptsize ±3.7} & \textbf{53.9}{\scriptsize ±11.9} & \textbf{33.7}{\scriptsize ±16.4} & \textbf{67.8}{\scriptsize ±3.7} & \textbf{54.4}{\scriptsize ±11.8} & \textbf{34.2}{\scriptsize ±16.5} \\
\midrule
ResNet50 (GN) & \textbf{70.6}{\scriptsize ±4.8} & 55.7{\scriptsize ±12.5} & 32.6{\scriptsize ±16.3} & \textbf{70.6}{\scriptsize ±4.8} & 55.7{\scriptsize ±12.5} & 32.6{\scriptsize ±16.3} \\
\hspace{0.5em}TENT & \textbf{70.6}{\scriptsize ±4.8} & 55.7{\scriptsize ±12.5} & 32.6{\scriptsize ±16.3} & 70.6{\scriptsize ±4.8} & 55.7{\scriptsize ±12.5} & 32.6{\scriptsize ±16.3} \\
\hspace{0.5em}SAR & \textbf{70.6}{\scriptsize ±4.8} & 55.7{\scriptsize ±12.5} & 32.6{\scriptsize ±16.3} & 70.6{\scriptsize ±4.8} & 55.7{\scriptsize ±12.5} & 32.6{\scriptsize ±16.3} \\
\hspace{0.5em}AQR & 69.0{\scriptsize ±3.6} & \textbf{56.6}{\scriptsize ±11.5} & \textbf{36.2}{\scriptsize ±16.7} & \textbf{70.8}{\scriptsize ±3.6} & \textbf{58.1}{\scriptsize ±11.6} & \textbf{37.3}{\scriptsize ±17.0} \\
\midrule
ViT Base (FT) & 71.3{\scriptsize ±5.0} & 60.0{\scriptsize ±12.2} & 40.6{\scriptsize ±12.6} & 71.3{\scriptsize ±5.0} & 60.0{\scriptsize ±12.2} & 40.6{\scriptsize ±12.6} \\
\hspace{0.5em}TENT & 68.0{\scriptsize ±5.6} & 54.9{\scriptsize ±12.4} & 34.1{\scriptsize ±11.1} & 68.0{\scriptsize ±5.6} & 54.9{\scriptsize ±12.4} & 34.1{\scriptsize ±11.1} \\
\hspace{0.5em}SAR & 68.0{\scriptsize ±5.5} & 55.0{\scriptsize ±12.4} & 34.1{\scriptsize ±11.1} & 71.3{\scriptsize ±4.9} & 59.8{\scriptsize ±12.1} & 40.7{\scriptsize ±12.5} \\
\hspace{0.5em}AQR & \textbf{73.9}{\scriptsize ±3.9} & \textbf{63.7}{\scriptsize ±10.3} & \textbf{45.6}{\scriptsize ±12.4} & \textbf{73.9}{\scriptsize ±3.9} & \textbf{63.8}{\scriptsize ±10.3} & \textbf{45.9}{\scriptsize ±12.2} \\
\midrule
ViT Base (TS) & 66.6{\scriptsize ±5.4} & 49.7{\scriptsize ±13.2} & 24.6{\scriptsize ±13.4} & 66.6{\scriptsize ±5.4} & 49.7{\scriptsize ±13.2} & 24.6{\scriptsize ±13.4} \\
\hspace{0.5em}TENT & 66.6{\scriptsize ±5.4} & 49.7{\scriptsize ±13.2} & 24.6{\scriptsize ±13.4} & 66.6{\scriptsize ±5.4} & 49.7{\scriptsize ±13.2} & 24.6{\scriptsize ±13.4} \\
\hspace{0.5em}SAR & 66.6{\scriptsize ±5.4} & 49.8{\scriptsize ±13.2} & 24.6{\scriptsize ±13.4} & 66.6{\scriptsize ±5.4} & 49.8{\scriptsize ±13.2} & 24.6{\scriptsize ±13.4} \\
\hspace{0.5em}AQR & \textbf{67.6}{\scriptsize ±4.7} & \textbf{52.8}{\scriptsize ±11.5} & \textbf{29.0}{\scriptsize ±13.6} & \textbf{67.5}{\scriptsize ±4.7} & \textbf{52.7}{\scriptsize ±11.5} & \textbf{28.9}{\scriptsize ±13.6} \\
\bottomrule
\end{tabular}%
}
\end{table*}
\section{Complementary Results for Performance on Model Architecture Variants}
Here we include results analogous to Figure \ref{fig:activation_diff_boxplot}, for ResNet (BN) architectures. This allows comparison of TTN against other approaches under the same architecture.

\begin{figure}[!htbp]
    \centering
    \includegraphics[width=\linewidth]{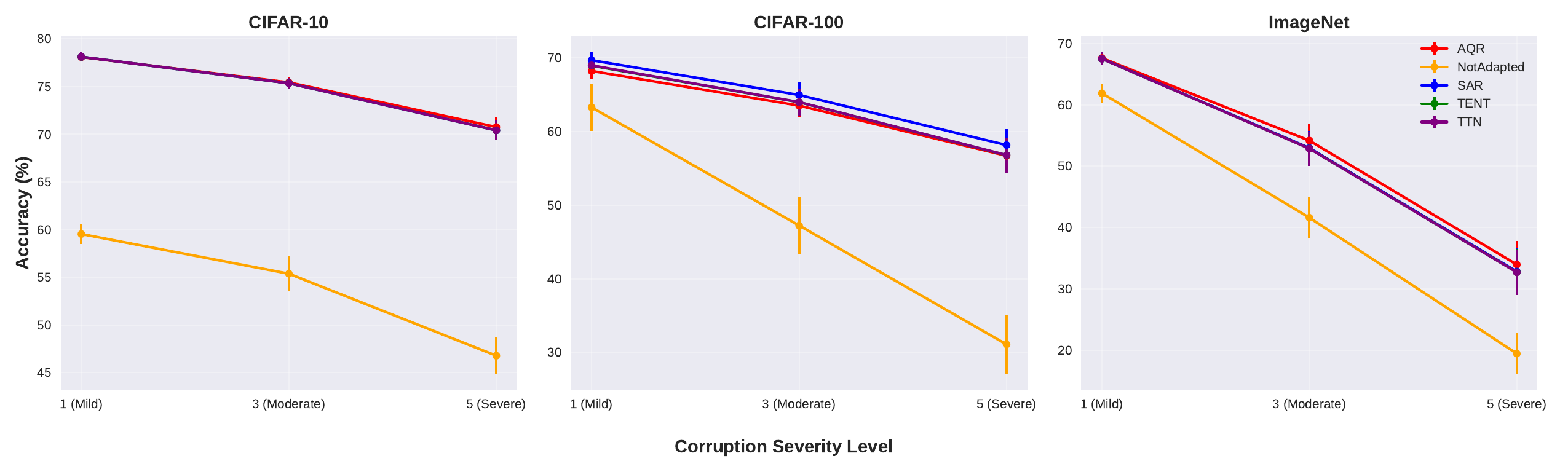}
    \caption{Performance comparison across corruption severity levels for ResNet (BN) models. Results averaged across all corruption types and batch sizes. Error bars represent the standard error of the mean for different experimental conditions.}
    \label{fig:severity_bn}
\end{figure}

\clearpage
\section{Hyper-parameter Details in Experiments}

\begin{table}[h!]
\centering
\caption{Baseline methods hyperparameters used in our experiments.}
\begin{tabular}{l l l}
\toprule
Method & Hyperparameter & Value \\
\midrule
\multirow{3}{*}{TENT \& SAR} & Learning rate & $2.5\times 10^{-4}$ (SGD with momentum) \\
 & Learning rate (ViT) & $\frac{10^{-3}}{64}\times \text{batch size}$ (SGD with momentum) \\
 & Episodic setting & Offline/episodic (2 steps) \\
\midrule
SAR & $E_0$ threshold & $\ln(1000) \times 0.40$ \\
& Layer freezing strategy & Freeze top layers \\
 & ResNet50 (GN) frozen layers & layer 4 \\
 & ViT-Base (LN) frozen layers & blocks 9, 10, and 11\\

\bottomrule
\end{tabular}
\label{tab:baseline-hparams}
\end{table}

\begin{table}[h!]
\centering
\caption{AQR hyperparameters used in our experiments.}
\begin{tabular}{l l}
\toprule
Hyperparameter & Value \\
\midrule
\textbf{Source Data Configuration} & \\
Source samples for computing percentiles ($n_s$) & 10{,}000 \\
Quantile granularity & 101 (percentiles $p_0 \ldots p_{100}$) \\
\midrule
\textbf{Layer Selection Strategy} & \\
ViT-Base-patch16-224 (ImageNet-pre-trained/trained) & Top half of transformer blocks \\
ViT-Patch4-32 (CIFAR-10/100 from scratch) & All layers \\
ResNets & All layers \\
\midrule
\textbf{Tail Calibration Strategy} & \\
Tail calibration batch size & 100 \\
Tail calibration samples (repeats) & 1{,}000 \\
\bottomrule
\end{tabular}
\label{tab:aqr-hparams}
\end{table}

\section{Ablation on Blocks to Adapt}
\label{app:ablation}

The ablation study evaluates our method across different ViT architectures, where ViT-Base (FT) refers to models pre-trained on ImageNet21k and finetuned on ImageNet1k or CIFAR-10 respectively, while ViT-Small (TS) was trained from scratch on CIFAR-10, with results averaged across multiple random seeds and corruption types. 
\begin{table}[htbp]
    \centering
    \caption{Ablation study of applying AQR to different blocks in ViT architecture.}
\begin{tabular}{lllll}
\toprule
         Model &     Adapted Blocks & Fine-tuning/Training Dataset & Pretraining Dataset & Accuracy (\%) \\
\midrule
 ViT-Base (FT) &         All Blocks &                     CIFAR-10 &        ImageNet-21k &   87.47±7.78 \\
 ViT-Base (FT) & Bottom Half Blocks &                     CIFAR-10 &        ImageNet-21k &   82.4±16.01 \\
 ViT-Base (FT) &    Top Half Blocks &                     CIFAR-10 &        ImageNet-21k &    88.35±7.0 \\
 ViT-Base (FT) &         All Blocks &                  ImageNet-1k &        ImageNet-21k &  36.25±24.44 \\
 ViT-Base (FT) & Bottom Half Blocks &                  ImageNet-1k &        ImageNet-21k &   22.6±25.28 \\
 ViT-Base (FT) &    Top Half Blocks &                  ImageNet-1k &        ImageNet-21k &   63.7±10.33 \\
ViT-Small (TS) &         All Blocks &                     CIFAR-10 &                   - &   74.05±2.46 \\
ViT-Small (TS) & Bottom Half Blocks &                     CIFAR-10 &                   - &   68.93±7.41 \\
ViT-Small (TS) &    Top Half Blocks &                     CIFAR-10 &                   - &   68.93±7.41 \\
\bottomrule
\end{tabular}
    \label{tab:layer_ablation}
\end{table}

\section{Finite-Sample and Finite Quantile Discretization Analysis of AQR}
\label{app:finite-sample-aqr}
In this appendix, we provide a finite-sample analysis of the AQR method under practical constraints: finite sample sizes and finite quantile discretization. Our main objective is to establish theoretical guarantees showing that AQR's error converges to zero while TTN's error is lower bounded by a constant bias, therefore providing theoretical justification for AQR's superior empirical performance.

\paragraph{Main Theoretical Goal.}
For analytical clarity, we focus on a single neuron $i$ and establish per-neuron bounds. The analysis extends to the full network by summing over all neurons or applying union bounds across neurons.

We aim to prove that under finite-sample conditions with quantile discretization, AQR achieves (per neuron):
\begin{itemize}
    \item \textbf{Convergent error}: $\text{MSE}_i(T_{i,K,n}^{\text{AQR}}) \to 0$ as sample sizes $n_S, n_T$ and quantile resolution $K$ increase
    \item \textbf{Explicit convergence rates}: $O(K^{-4}) + O(n_S^{-1}) + O(n_T^{-1})$ with logarithmic factors
    \item \textbf{Superiority over TTN}: For sufficiently large $K, n_S, n_T$, we have $\text{MSE}_i(T_{i,K,n}^{\text{AQR}}) < \text{MSE}_i(T_i^{\text{TTN}})$
\end{itemize}

This analysis extends the idealized proof from the main text (Section~\ref{sec:anlysis_one_hidden_layer}) to more realistic settings with empirical estimation and computational constraints.

\paragraph{Setting and Notation.}
We adopt the one-hidden-layer MLP framework from the main text. Consider a network operating on the source domain with input $x \sim P$ on $\mathbb{R}^d$, weight matrix $W \in \mathbb{R}^{m \times d}$, bias $b \in \mathbb{R}^m$, and strictly increasing activation $\phi: \mathbb{R} \to \mathbb{R}$. The hidden representations are:
$$h^S = \phi(Wx + b) \in \mathbb{R}^m, \quad h^T = k(h^S)$$
where $k = (k_1, \ldots, k_m)$ with each $k_i: \mathbb{R} \to \mathbb{R}$ being strictly increasing (representing per-neuron corruptions).

For each neuron $i \in \{1, \ldots, m\}$, we denote:
\begin{itemize}
    \item $P_i$: source marginal of $h_i^S$ with CDF $F_{P_i}$, density $f_{P_i}$, quantile function $H_i = F_{P_i}^{-1}$
    \item $Q_i$: target marginal of $h_i^T$ with CDF $F_{Q_i}$
    \item Corruption relationship: $F_{Q_i}(u) = F_{P_i}(k_i^{-1}(u))$
\end{itemize}
The oracle AQR transformation is:
$T_{i,*}^{\text{AQR}}(z) := F_{P_i}^{-1}(F_{Q_i}(z)) = H_i(F_{Q_i}(z))$
which satisfies $T_{i,*}^{\text{AQR}}(h_i^T) = h_i^S$ (perfect recovery).
We measure adaptation quality by the per-neuron mean squared error:
$\text{MSE}_i(T) := \mathbb{E}[(T_i(h_i^T) - h_i^S)^2]$

\paragraph{Regularity Assumptions.}
For theoretical tractability, we impose these conditions on each source density $f_{P_i}$:
\begin{align}
0 < \underline{f}_i \leq f_{P_i}(x) \leq M_i < \infty \quad &\text{for all } x \in \text{supp}(P_i) \label{eq:density-bounds} \\
f_{P_i} \in C^1 \text{ with } \|f_{P_i}'\|_\infty \leq L_i \label{eq:smoothness}
\end{align}
These conditions ensure that the quantile function $H_i$ is well-behaved with bounded derivatives:
\begin{align}
\|H_i'\|_\infty \leq \frac{1}{\underline{f}_i}, \quad \|H_i''\|_\infty \leq \frac{L_i}{\underline{f}_i^3} \label{eq:quantile-bounds}
\end{align}
These bounds follow from the identities $H_i'(u) = 1/f_{P_i}(H_i(u))$ and $H_i''(u) = -f_{P_i}'(H_i(u))/f_{P_i}(H_i(u))^3$.

\paragraph{Practical AQR with Finite Samples and Quantiles.}
In practice, we only observe finite samples from both domains:
\begin{itemize}
    \item Source samples: $\{h_{j}^S\}_{j=1}^{n_S} \stackrel{\text{iid}}{\sim} P$ with marginals $P_i$
    \item Target samples: $\{h_{j}^T\}_{j=1}^{n_T} \stackrel{\text{iid}}{\sim} Q$ with marginals $Q_i$
\end{itemize}
From these samples, we construct empirical CDFs $\widehat{F}_{P_i}$ and $\widehat{F}_{Q_i}$ for each neuron $i$.
For computational efficiency, we discretize the quantile transformation using $K$ uniform knots:
$$u_j := \frac{j}{K}, \quad j = 0, 1, \ldots, K.$$
The empirical source quantiles are:
$$\widehat{q}_{i,j} := \widehat{F}_{P_i}^{-1}(u_j).$$
We then define $\widetilde{H}_{i,K}$ as the piecewise-linear interpolant satisfying $\widetilde{H}_{i,K}(u_j) = \widehat{q}_{i,j}$ and linear interpolation on each interval $[u_{j-1}, u_j]$.
The practical AQR map becomes:
$$T_{i,K,n}^{\text{AQR}}(z) := \widetilde{H}_{i,K}(\widehat{F}_{Q_i}(z))$$
where $n = (n_S, n_T)$ represents the sample sizes.

\paragraph{Main Theoretical Result.}
Our main result provides explicit finite-sample error bounds for the practical AQR method:

\begin{theorem}[Finite-Sample AQR Error Bound]
\label{thm:finite-sample-aqr}
Under the regularity conditions~\eqref{eq:density-bounds}-\eqref{eq:smoothness}, for any neuron $i$ and any $\delta \in (0,1)$, with probability at least $1-2\delta$:
\begin{align}
\text{MSE}_i(T_{i,K,n}^{\text{AQR}}) &\leq 3\left(\frac{L_i}{8\underline{f}_i^3}\right)^2 K^{-4} + \frac{3}{\underline{f}_i^2}\varepsilon_S(\delta, n_S)^2 + \frac{3}{\underline{f}_i^2}\varepsilon_T(\delta, n_T)^2 \label{eq:main-bound}
\end{align}
where $\varepsilon_\bullet(\delta, n) := \sqrt{\frac{1}{2n}\log\frac{2}{\delta}}$.\\
In particular, $\text{MSE}_i(T_{i,K,n}^{\text{AQR}}) \to 0$ as $K, n_S, n_T \to \infty$ at rates $O(K^{-4})$, $O(n_S^{-1})$, and $O(n_T^{-1})$.
\end{theorem}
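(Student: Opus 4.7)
The plan is to bound $|T_{i,K,n}^{\mathrm{AQR}}(h_i^T) - h_i^S|$ uniformly on a high-probability event and then deduce the MSE bound by integration. Using the oracle identity $h_i^S = H_i(F_{Q_i}(h_i^T))$ established in the main-text proof, and writing $u := F_{Q_i}(h_i^T)$ and $\widehat u := \widehat F_{Q_i}(h_i^T)$, I would decompose
$$\widetilde H_{i,K}(\widehat u) - H_i(u) = \bigl[\widetilde H_{i,K}(\widehat u) - I_K H_i(\widehat u)\bigr] + \bigl[I_K H_i(\widehat u) - H_i(\widehat u)\bigr] + \bigl[H_i(\widehat u) - H_i(u)\bigr],$$
where $I_K H_i$ is the piecewise-linear interpolant of the \emph{true} quantile $H_i$ on the knots $u_j = j/K$. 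This isolates, in order, a source-sampling error, a discretization error, and a target-sampling error, corresponding to the three summands in the theorem.

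First, I would control the discretization term with the standard $C^2$ piecewise-linear bound on $[0,1]$, giving $\|I_K H_i - H_i\|_\infty \le \|H_i''\|_\infty/(8K^2)$; substituting the quantile-derivative bound $\|H_i''\|_\infty \le L_i/\underline f_i^3$ from \eqref{eq:quantile-bounds} yields the $O(K^{-2})$ piece. Next, since $\widetilde H_{i,K}$ and $I_K H_i$ are both piecewise linear on the same partition, their $L^\infty$ distance is attained at a knot, where it equals $|\widehat F_{P_i}^{-1}(u_j) - F_{P_i}^{-1}(u_j)|$; combining the DKW inequality on the source sample with the Lipschitz bound $\|H_i'\|_\infty \le 1/\underline f_i$ gives $\varepsilon_S(\delta,n_S)/\underline f_i$ with probability at least $1-\delta$. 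Finally, DKW on the target sample yields $|\widehat u - u| \le \varepsilon_T(\delta,n_T)$ with probability at least $1-\delta$, whence $|H_i(\widehat u) - H_i(u)| \le \varepsilon_T(\delta,n_T)/\underline f_i$ by the same quantile-Lipschitz bound.

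A union bound over the two DKW events makes all three pointwise bounds hold simultaneously with probability at least $1-2\delta$, and importantly they are deterministic functions of $(K,n_S,n_T,\delta)$ that do not depend on the evaluation point $h_i^T$. Squaring the triangle-inequality sum via the elementary inequality $(a+b+c)^2 \le 3(a^2+b^2+c^2)$ controls the integrand of $\mathrm{MSE}_i$ pointwise, so taking expectation over $h_i^T$ on this good event preserves the claimed bound \eqref{eq:main-bound}. The convergence statement then follows by letting $K,n_S,n_T \to \infty$ with, e.g., $\delta = 1/\min(n_S,n_T)$, which produces the advertised $O(K^{-4}) + \widetilde O(n_S^{-1}) + \widetilde O(n_T^{-1})$ rate.

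The main obstacle I anticipate is the joint handling of the randomness in $\widehat F_{Q_i}$ and in the evaluation point $h_i^T$ itself: in a realistic batched TTA setting, the target empirical CDF is estimated from the same batch in which $h_i^T$ lies, so $\widehat u$ and $h_i^T$ are dependent. I would handle this by invoking DKW in its uniform-over-$z$ form, which controls $\sup_z |\widehat F_{Q_i}(z) - F_{Q_i}(z)|$ simultaneously for all $z$ independent of any particular test point, and by absorbing the residual contribution from the low-probability complement either through boundedness of the hidden activations or directly into the $2\delta$ slack the theorem already permits. A cleaner but less realistic alternative is to assume $\widehat F_{Q_i}$ is computed from an independent holdout batch, which removes the coupling entirely and makes the expectation step immediate.
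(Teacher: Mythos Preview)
Your proposal is correct and follows essentially the same route as the paper: the paper uses the identical three-term decomposition (splitting $\widetilde H_{i,K}-H_i$ into knot-stability plus piecewise-linear interpolation error, then adding the target-CDF Lipschitz term), the same DKW bounds, the same $\|H_i''\|_\infty/(8K^2)$ interpolation estimate, and the same $(a+b+c)^2\le 3(a^2+b^2+c^2)$ step to pass from the uniform bound to the MSE. Your additional discussion of the dependence between $\widehat F_{Q_i}$ and the evaluation point $h_i^T$ is a welcome refinement that the paper does not address explicitly, and your resolution via the uniform-in-$z$ DKW event is exactly right.
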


This result establishes that the error for AQR's mapping converges to zero, while the error for TTN's mapping maintains a constant positive bias whenever corruptions are non-affine.

\paragraph{Proof Strategy and Key Lemmas.}
Our proof decomposes the total error into three manageable components and bounds each using concentration inequalities and approximation theory. We present the analysis through a sequence of lemmas that build towards the main result.

\begin{lemma}[Error Decomposition]
\label{lem:error-decomp}
For any neuron $i$ and input $z \in \mathbb{R}$:
\begin{align}
T_{i,K,n}^{\text{AQR}}(z) - T_{i,*}^{\text{AQR}}(z) &= \underbrace{(\widetilde{H}_{i,K} - H_i)(\widehat{F}_{Q_i}(z))}_{\text{quantile estimation error}} + \underbrace{H_i(\widehat{F}_{Q_i}(z)) - H_i(F_{Q_i}(z))}_{\text{CDF estimation error}} \label{eq:error-decomp}
\end{align}
\end{lemma}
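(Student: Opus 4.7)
The plan is to establish this lemma by a straightforward add-and-subtract manipulation using an intermediate pivot chosen to cleanly separate the two independent sources of approximation error: the one coming from the discretized, sample-based estimate of the source quantile function, and the one coming from the sample-based estimate of the target CDF.

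First, I would expand both sides using the stated definitions: $T_{i,K,n}^{\text{AQR}}(z)=\widetilde{H}_{i,K}(\widehat{F}_{Q_i}(z))$ and $T_{i,*}^{\text{AQR}}(z)=H_i(F_{Q_i}(z))$, so the object of interest is $\widetilde{H}_{i,K}(\widehat{F}_{Q_i}(z))-H_i(F_{Q_i}(z))$. Next, I would insert the intermediate pivot $H_i(\widehat{F}_{Q_i}(z))$ — the true population quantile function evaluated at the empirical target CDF — by writing
\begin{equation*}
\widetilde{H}_{i,K}(\widehat{F}_{Q_i}(z))-H_i(F_{Q_i}(z)) \;=\; \bigl[\widetilde{H}_{i,K}(\widehat{F}_{Q_i}(z))-H_i(\widehat{F}_{Q_i}(z))\bigr] \;+\; \bigl[H_i(\widehat{F}_{Q_i}(z))-H_i(F_{Q_i}(z))\bigr].
\end{equation*}
The first bracket evaluates the function-level gap $\widetilde{H}_{i,K}-H_i$ at a single argument, which is exactly the quantile-estimation error labeled in the lemma; it isolates everything caused by replacing $H_i$ with its finite-sample, $K$-knot, piecewise-linear surrogate. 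The second bracket applies the \emph{same} function $H_i$ to two different arguments and is therefore the pure CDF-estimation error, with the discrepancy coming entirely from $\widehat{F}_{Q_i}-F_{Q_i}$.

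Honestly, there is no real obstacle here: the statement is an algebraic identity, and the only substantive content is the choice of pivot. I would briefly justify that choice by noting that the pivot decouples source-side randomness from target-side randomness, so that downstream one can bound the first bracket via sup-norm control of $\widetilde{H}_{i,K}-H_i$ (combining DKW on the source empirical CDF with a piecewise-linear interpolation bias of order $K^{-2}$ on $H_i$, using $\|H_i''\|_\infty\le L_i/\underline{f}_i^3$ from \eqref{eq:quantile-bounds}), while the second bracket admits the Lipschitz estimate $|H_i(\widehat{F}_{Q_i}(z))-H_i(F_{Q_i}(z))|\le \underline{f}_i^{-1}\|\widehat{F}_{Q_i}-F_{Q_i}\|_\infty$ feeding into another DKW application. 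This clean separation is precisely what produces the three additive terms of the bound in Theorem~\ref{thm:finite-sample-aqr} after squaring, expectation, and Young's inequality $(a+b+c)^2\le 3(a^2+b^2+c^2)$.
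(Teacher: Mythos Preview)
Your proposal is correct and matches the paper's own proof essentially line-for-line: both expand the definitions, add and subtract the pivot $H_i(\widehat{F}_{Q_i}(z))$, and identify the resulting two brackets as the quantile-estimation and CDF-estimation errors. Your additional commentary on why this particular pivot decouples the source-side and target-side randomness, and how each bracket feeds into the downstream bounds, is accurate and goes slightly beyond what the paper writes in the proof itself.
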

\begin{proof}
Considering practical AQR as $T_{i,K,n}^{\text{AQR}}(z) = \widetilde{H}_{i,K}(\widehat{F}_{Q_i}(z))$ and oracle AQR as $T_{i,*}^{\text{AQR}}(z) = H_i(F_{Q_i}(z))$; We add and subtract the intermediate term $H_i(\widehat{F}_{Q_i}(z))$:
\begin{align}
T_{i,K,n}^{\text{AQR}}(z) - T_{i,*}^{\text{AQR}}(z) &= \widetilde{H}_{i,K}(\widehat{F}_{Q_i}(z)) - H_i(F_{Q_i}(z)) \\
&= \widetilde{H}_{i,K}(\widehat{F}_{Q_i}(z)) - H_i(\widehat{F}_{Q_i}(z)) + H_i(\widehat{F}_{Q_i}(z)) - H_i(F_{Q_i}(z)) \\
&= (\widetilde{H}_{i,K} - H_i)(\widehat{F}_{Q_i}(z)) + H_i(\widehat{F}_{Q_i}(z)) - H_i(F_{Q_i}(z))
\end{align}
\end{proof}


\textbf{Intuition}: This decomposition separates two distinct error sources: (1) \emph{quantile estimation error} from approximating the true quantile function $H_i$ with the empirical piecewise-linear $\widetilde{H}_{i,K}$, and (2) \emph{CDF estimation error} from using the empirical target CDF $\widehat{F}_{Q_i}$ instead of the true $F_{Q_i}$.\\
The next lemma provides concentration bounds for empirical CDF estimation:
\begin{lemma}[Concentration via Dvoretzky-Kiefer-Wolfowitz Inequality]
\label{lem:dkw}
For any $\delta \in (0,1)$, with probability at least $1-\delta$:
\begin{align}
\|\widehat{F}_{Q_i} - F_{Q_i}\|_\infty &\leq \varepsilon_T(\delta, n_T) \\
\|\widehat{F}_{P_i} - F_{P_i}\|_\infty &\leq \varepsilon_S(\delta, n_S)
\end{align}
where $\varepsilon_\bullet(\delta, n) = \sqrt{\frac{1}{2n}\log\frac{2}{\delta}}$.
\end{lemma}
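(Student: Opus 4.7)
The plan is to apply the Dvoretzky--Kiefer--Wolfowitz inequality with Massart's sharp constant separately to the source and target empirical CDFs. Recall that for i.i.d.\ samples $Y_1, \ldots, Y_n$ drawn from a distribution with CDF $F$, and empirical CDF $\widehat{F}_n$, the DKW--Massart inequality states
\[
\Pr\bigl(\|\widehat{F}_n - F\|_\infty > t\bigr) \ \le \ 2\exp(-2nt^2) \qquad \text{for all } t > 0.
\]
This is precisely the tool needed, since the lemma asks for a uniform (sup-norm) bound on the CDF estimation error with an exponential concentration tail.

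First, I would verify that the hypotheses of DKW--Massart are satisfied. The per-neuron samples $\{h_{i,j}^S\}_{j=1}^{n_S}$ are the $i$-th coordinates of the i.i.d.\ joint draws $\{h_j^S\}$, so they are themselves i.i.d.\ draws from the marginal $P_i$; the same argument applies to the target samples and $Q_i$. The empirical CDFs $\widehat{F}_{P_i}$ and $\widehat{F}_{Q_i}$ defined earlier coincide with the classical empirical CDFs of these marginal samples, so DKW--Massart applies verbatim, independently on the source and target sides.

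Second, I would invert the tail bound. Setting $2\exp(-2nt^2) = \delta$ and solving gives $t = \sqrt{\tfrac{1}{2n}\log(2/\delta)}$, which is exactly $\varepsilon_\bullet(\delta, n)$. Instantiating this for $n = n_S$ (source) and $n = n_T$ (target) yields the two claimed inequalities, each holding with probability at least $1 - \delta$. The factor of $2$ in the $1 - 2\delta$ probability budget of Theorem~\ref{thm:finite-sample-aqr} then comes from a union bound over the source and target events, or equivalently from the independence of the two sample sets via $(1-\delta)^2 \geq 1-2\delta$.

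There is no substantive obstacle in proving this lemma: it is essentially a textbook invocation of DKW--Massart. The only point that merits explicit mention is the use of Massart's optimal prefactor $2$ rather than the weaker unspecified constant in the original DKW inequality, since it is this sharpness that produces the stated $\tfrac{1}{2n}\log(2/\delta)$ rate and cleanly propagates through the $1/\underline{f}_i^2$ Lipschitz factor appearing in the CDF-estimation term of the error decomposition from Lemma~\ref{lem:error-decomp}.
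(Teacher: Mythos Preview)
Your proposal is correct and matches the paper's approach: the paper does not give a detailed proof of this lemma but simply invokes the DKW inequality as a standard result from empirical process theory, noting the $O(n^{-1/2})$ rate with logarithmic dependence on the confidence level. Your additional care in checking the i.i.d.\ hypothesis for the marginal samples and in explaining how the $1-2\delta$ budget of Theorem~\ref{thm:finite-sample-aqr} arises from a union bound over the two events is accurate and more explicit than the paper itself.
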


\textbf{Explanation}: The DKW inequality is a fundamental result in empirical process theory that provides uniform concentration bounds for empirical distribution functions. The bound decreases at the optimal parametric rate $O(n^{-1/2})$ with logarithmic dependence on the confidence level.\\
We next establish how CDF estimation errors transfer to quantile estimation errors:
\begin{lemma}[CDF-to-Quantile Error Transfer]
\label{lem:cdf-to-quantile}
Under the density bounds~\eqref{eq:density-bounds}:
$$\sup_{u \in [0,1]} |\widehat{F}_{P_i}^{-1}(u) - F_{P_i}^{-1}(u)| \leq \frac{\|\widehat{F}_{P_i} - F_{P_i}\|_\infty}{\underline{f}_i} \leq \frac{\varepsilon_S(\delta, n_S)}{\underline{f}_i}$$
\end{lemma}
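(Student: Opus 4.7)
The plan is to derive the first inequality from a Lipschitz argument on the true quantile $H_i = F_{P_i}^{-1}$ and to obtain the second inequality by inserting the DKW bound from Lemma~\ref{lem:dkw}. The workhorse is the density lower bound in~\eqref{eq:density-bounds}: integrating $f_{P_i} \geq \underline{f}_i$ over any interval gives $|F_{P_i}(x_2) - F_{P_i}(x_1)| \geq \underline{f}_i |x_2 - x_1|$, so $F_{P_i}$ is bi-Lipschitz and $H_i$ is Lipschitz with constant $1/\underline{f}_i$ on the range of $F_{P_i}$.

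First I would fix $u \in [0,1]$, set $x := F_{P_i}^{-1}(u)$ and $\hat{x} := \widehat{F}_{P_i}^{-1}(u)$ using the generalized inverse $\widehat{F}_{P_i}^{-1}(u) := \inf\{t: \widehat{F}_{P_i}(t) \geq u\}$, and reduce the claim to showing $|F_{P_i}(\hat{x}) - u| \leq \|\widehat{F}_{P_i} - F_{P_i}\|_\infty$. Once that inequality is in hand, the bi-Lipschitz lower bound yields $\underline{f}_i |\hat{x} - x| \leq |F_{P_i}(\hat{x}) - F_{P_i}(x)| = |F_{P_i}(\hat{x}) - u|$, and dividing by $\underline{f}_i$ and taking the supremum over $u$ produces the first inequality. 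For the reduction step I would exploit the defining sandwich $\widehat{F}_{P_i}(\hat{x}-) \leq u \leq \widehat{F}_{P_i}(\hat{x})$: the right half combined with the uniform DKW-type bound gives $F_{P_i}(\hat{x}) \geq u - \|\widehat{F}_{P_i} - F_{P_i}\|_\infty$, while the left half, together with continuity of $F_{P_i}$ (inherited from the density) and a limit $t \uparrow \hat{x}$, gives $F_{P_i}(\hat{x}) \leq u + \|\widehat{F}_{P_i} - F_{P_i}\|_\infty$. The second inequality in the lemma is then immediate from Lemma~\ref{lem:dkw}.

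The main subtlety, and the only step that needs genuine care, is that $\widehat{F}_{P_i}$ is a step function, so $\widehat{F}_{P_i}(\hat{x})$ can exceed $u$ by up to the jump height at $\hat{x}$; one cannot simply apply the DKW bound to the pair $(F_{P_i}(\hat{x}), u)$ directly. The two-sided sandwich above, paired with continuity of $F_{P_i}$, is exactly what allows the sup-norm bound to absorb this jump and delivers the clean $\|\widehat{F}_{P_i} - F_{P_i}\|_\infty / \underline{f}_i$ constant in the statement rather than an extra $O(1/n_S)$ discretization term.
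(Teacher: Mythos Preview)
Your argument is correct and is exactly the standard route to this inequality: use the density lower bound to make $F_{P_i}$ bi-Lipschitz (hence $H_i$ Lipschitz with constant $1/\underline{f}_i$), then control $|F_{P_i}(\hat{x})-u|$ via the generalized-inverse sandwich $\widehat{F}_{P_i}(\hat{x}-)\le u\le \widehat{F}_{P_i}(\hat{x})$ together with continuity of $F_{P_i}$, and finally invoke Lemma~\ref{lem:dkw}. Your explicit handling of the step-function subtlety (so that no extra $O(1/n_S)$ term appears) is the right level of care.

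The paper itself does not give a formal proof of this lemma; it offers only the one-sentence ``Key Insight'' that the density lower bound $\underline{f}_i$ governs how CDF errors amplify into quantile errors. Your proposal is precisely the rigorous argument behind that remark, so it is consistent with---and more complete than---the paper's treatment.
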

\textbf{Key Insight}: The density lower bound $\underline{f}_i$ controls how CDF errors amplify into quantile errors. When the density is bounded away from zero, quantile estimation remains stable.

The following lemma bounds the discretization error from using finite quantiles:

\begin{lemma}[Finite-Quantile Discretization Error]
\label{lem:discretization}
Let $H_{i,K}$ be the piecewise-linear interpolant of the true quantile function $H_i$ on the uniform knots $\{u_j\}_{j=0}^K$. Then:
$$\|H_i - H_{i,K}\|_\infty \leq \frac{\|H_i''\|_\infty}{8} K^{-2} \leq \frac{L_i}{8\underline{f}_i^3} K^{-2}$$
\end{lemma}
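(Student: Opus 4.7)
}
The plan is to reduce the global sup-norm bound to a sub-interval estimate and then apply the classical error formula for piecewise-linear Lagrange interpolation of a $C^2$ function. Since $H_{i,K}$ agrees with $H_i$ at every knot $u_j = j/K$ and is linear on each sub-interval $I_j := [u_{j-1}, u_j]$ of length $1/K$, I would first write
\[
\|H_i - H_{i,K}\|_\infty \;=\; \max_{1 \le j \le K} \; \sup_{u \in I_j} \bigl| H_i(u) - H_{i,K}(u) \bigr|,
\]
so it suffices to bound each sub-interval error uniformly.

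On a fixed sub-interval $I_j = [a,b]$ with $b-a = 1/K$, the error function $e(u) := H_i(u) - H_{i,K}(u)$ is $C^2$ and vanishes at $u=a$ and $u=b$. The standard approach is to fix $u^\star \in (a,b)$ and consider the auxiliary function
\[
g(u) \;:=\; e(u) \;-\; \frac{e(u^\star)}{(u^\star - a)(u^\star - b)}\,(u-a)(u-b),
\]
which vanishes at the three points $a$, $b$, and $u^\star$. Two applications of Rolle's theorem then yield some $\xi \in (a,b)$ with $g''(\xi) = 0$. Since $H_{i,K}''\equiv 0$ on the interior of $I_j$, this gives $H_i''(\xi) = \tfrac{2\,e(u^\star)}{(u^\star-a)(u^\star-b)}$, and the elementary bound $|(u^\star-a)(u^\star-b)| \le (b-a)^2/4$ produces
\[
|e(u^\star)| \;\le\; \frac{\|H_i''\|_\infty}{8}\,(b-a)^2 \;=\; \frac{\|H_i''\|_\infty}{8\,K^2}.
\]

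Taking the maximum over $j$ gives the first inequality of the lemma. For the second, I would invoke the derivative bound~\eqref{eq:quantile-bounds} established from the regularity conditions~\eqref{eq:density-bounds}--\eqref{eq:smoothness}, namely $\|H_i''\|_\infty \le L_i/\underline{f}_i^{\,3}$, and substitute directly to obtain $\|H_i - H_{i,K}\|_\infty \le \tfrac{L_i}{8\underline{f}_i^{\,3}} K^{-2}$.

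I do not expect a real obstacle here: the proof is essentially the textbook interpolation-error estimate, and the only subtlety is confirming that $H_i \in C^2$ on $(0,1)$ so that the Rolle-based argument applies, which follows from the $C^1$ density assumption in~\eqref{eq:smoothness} together with the strict positivity of $f_{P_i}$. A minor care point is the behavior at the boundary knots $u_0=0$ and $u_K=1$, where $H_i$ may blow up if the support is unbounded; to stay fully rigorous I would either restrict to compactly supported $P_i$, or equivalently work on $[\eta, 1-\eta]$ and let $\eta\to 0$, noting that the bound depends only on $\|H_i''\|_\infty$ which is controlled uniformly by~\eqref{eq:quantile-bounds}.
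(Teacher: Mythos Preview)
Your proposal is correct and matches the paper's approach: the paper simply invokes this as a classical approximation-theory result, and you supply the standard Rolle-based derivation of the linear-interpolation error bound and then substitute~\eqref{eq:quantile-bounds}. Your boundary concern is in fact already handled by assumption~\eqref{eq:density-bounds}: a density bounded below by $\underline{f}_i>0$ on its support forces that support to have finite Lebesgue measure, so $H_i$ is finite at $u=0$ and $u=1$ and the uniform bound on $\|H_i''\|_\infty$ from~\eqref{eq:quantile-bounds} applies across all of $[0,1]$.
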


\textbf{Explanation}: This is a classical result from approximation theory. Linear interpolation of a twice-differentiable function achieves quadratic convergence in the mesh size. The bound depends on the second derivative of the quantile function, which we control through our smoothness assumptions.

Finally, we bound how empirical estimation affects the interpolant:

\begin{lemma}[Knot Stability]
\label{lem:knot-stability}
The empirical and theoretical piecewise-linear interpolants satisfy:
$$\|\widetilde{H}_{i,K} - H_{i,K}\|_\infty \leq \max_{0 \leq j \leq K} |\widehat{q}_{i,j} - H_i(u_j)| \leq \frac{\varepsilon_S(\delta, n_S)}{\underline{f}_i}$$
\end{lemma}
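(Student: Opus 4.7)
}
The plan is to establish the two inequalities separately. The first reduces a uniform bound on the difference of two piecewise-linear functions to a bound on their values at the knots; the second is a direct consequence of Lemma~\ref{lem:cdf-to-quantile} applied pointwise at each knot, combined with the DKW concentration from Lemma~\ref{lem:dkw}.

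First, I would observe that both $\widetilde{H}_{i,K}$ and $H_{i,K}$ are piecewise linear on the \emph{same} uniform partition $\{u_j\}_{j=0}^{K}$, with nodal values $\widetilde{H}_{i,K}(u_j)=\widehat{q}_{i,j}$ and $H_{i,K}(u_j)=H_i(u_j)$. Consequently, the difference $D_K:=\widetilde{H}_{i,K}-H_{i,K}$ is itself piecewise linear on that same partition, with $D_K(u_j)=\widehat{q}_{i,j}-H_i(u_j)$. Because an affine function on a closed interval attains its extrema at the endpoints, we have $\sup_{u\in[u_{j-1},u_j]}|D_K(u)|\le\max\!\bigl(|D_K(u_{j-1})|,|D_K(u_j)|\bigr)$ for each $j$. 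Taking the maximum over $j$ yields $\|\widetilde{H}_{i,K}-H_{i,K}\|_\infty\le\max_{0\le j\le K}|\widehat{q}_{i,j}-H_i(u_j)|$.

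For the second inequality, by construction $\widehat{q}_{i,j}=\widehat{F}_{P_i}^{-1}(u_j)$ and $H_i(u_j)=F_{P_i}^{-1}(u_j)$, so each knot discrepancy satisfies $|\widehat{q}_{i,j}-H_i(u_j)|\le\|\widehat{F}_{P_i}^{-1}-F_{P_i}^{-1}\|_\infty$. Lemma~\ref{lem:cdf-to-quantile} controls this supremum by $\|\widehat{F}_{P_i}-F_{P_i}\|_\infty/\underline{f}_i$ under the density lower bound~\eqref{eq:density-bounds}, and Lemma~\ref{lem:dkw} bounds the numerator by $\varepsilon_S(\delta,n_S)$ with probability at least $1-\delta$. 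Taking the maximum over $j\in\{0,\ldots,K\}$ preserves this uniform bound, giving $\max_j|\widehat{q}_{i,j}-H_i(u_j)|\le\varepsilon_S(\delta,n_S)/\underline{f}_i$.

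The argument is essentially routine and I do not anticipate any serious obstacle. The only subtlety worth checking is that $\widetilde{H}_{i,K}$ and $H_{i,K}$ share the \emph{same} knot set $\{u_j\}$, which is what makes their difference piecewise linear on one common partition and licences the endpoint-extremum step. A minor technical caveat is that the second inequality is a high-probability statement inherited from the DKW event in Lemma~\ref{lem:dkw}, so it should be stated as holding on the same event (probability at least $1-\delta$), which is precisely how this lemma will be consumed when plugged into the decomposition of Lemma~\ref{lem:error-decomp} to assemble Theorem~\ref{thm:finite-sample-aqr}.
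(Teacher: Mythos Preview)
Your proposal is correct and follows essentially the same approach as the paper: the paper's reasoning is precisely that both interpolants share the same piecewise-linear basis (so the sup-norm of the difference is controlled by the maximum knot error), and that the knot errors are bounded via Lemma~\ref{lem:cdf-to-quantile}. You have simply fleshed out these two steps with the endpoint-extremum argument and explicitly chained Lemma~\ref{lem:cdf-to-quantile} with Lemma~\ref{lem:dkw}, and your remark about the high-probability caveat is appropriate.
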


\textbf{Reasoning}: Since both interpolants use the same piecewise-linear basis, their difference is controlled by the maximum knot error. The knot errors are bounded using Lemma~\ref{lem:cdf-to-quantile}.

\paragraph{Proof of Main Theorem.}

\textbf{Step 1: Uniform Error Bound.}
We combine all lemmas to bound the uniform error for each neuron. From Lemma~\ref{lem:error-decomp} and the triangle inequality:
\begin{align}
\sup_{z \in \mathbb{R}} |T_{i,K,n}^{\text{AQR}}(z) - T_{i,*}^{\text{AQR}}(z)| &\leq \|\widetilde{H}_{i,K} - H_i\|_\infty + \|H_i'\|_\infty \|\widehat{F}_{Q_i} - F_{Q_i}\|_\infty
\end{align}

We further decompose the quantile error using the triangle inequality:
$$\|\widetilde{H}_{i,K} - H_i\|_\infty \leq \|\widetilde{H}_{i,K} - H_{i,K}\|_\infty + \|H_{i,K} - H_i\|_\infty$$

Applying Lemmas~\ref{lem:discretization} and~\ref{lem:knot-stability}, along with $\|H_i'\|_\infty \leq 1/\underline{f}_i$ from~\eqref{eq:quantile-bounds}, we obtain with probability at least $1-2\delta$:
\begin{align}
\sup_{z \in \mathbb{R}} |T_{i,K,n}^{\text{AQR}}(z) - T_{i,*}^{\text{AQR}}(z)| &\leq \frac{L_i}{8\underline{f}_i^3} K^{-2} + \frac{\varepsilon_S(\delta, n_S)}{\underline{f}_i} + \frac{\varepsilon_T(\delta, n_T)}{\underline{f}_i} \label{eq:uniform-bound}
\end{align}

\textbf{Step 2: Per-Neuron MSE Bound.}
Since the oracle AQR achieves perfect recovery ($T_{i,*}^{\text{AQR}}(h_i^T) = h_i^S$), we have:
$\text{MSE}_i(T_{i,K,n}^{\text{AQR}}) = \mathbb{E}[(T_{i,K,n}^{\text{AQR}}(h_i^T) - h_i^S)^2] = \mathbb{E}[(T_{i,K,n}^{\text{AQR}}(h_i^T) - T_{i,*}^{\text{AQR}}(h_i^T))^2]$

This MSE is bounded by the squared uniform error:
$\text{MSE}_i(T_{i,K,n}^{\text{AQR}}) \leq \left(\sup_z |T_{i,K,n}^{\text{AQR}}(z) - T_{i,*}^{\text{AQR}}(z)|\right)^2$

Using the inequality $(a+b+c)^2 \leq 3(a^2 + b^2 + c^2)$ and~\eqref{eq:uniform-bound}, with probability at least $1-2\delta$:
\begin{align}
\text{MSE}_i(T_{i,K,n}^{\text{AQR}}) &\leq 3\left(\frac{L_i}{8\underline{f}_i^3}\right)^2 K^{-4} + \frac{3}{\underline{f}_i^2}\varepsilon_S(\delta, n_S)^2 + \frac{3}{\underline{f}_i^2}\varepsilon_T(\delta, n_T)^2 \label{eq:per-neuron-bound}
\end{align}

This completes the proof of the main theorem. \qed

\paragraph{Comparison with TTN and Practical Implications.}
Our finite-sample analysis reveals a fundamental difference between AQR and TTN:

\textbf{TTN Limitation}: From the main text analysis, TTN uses affine transformations $T_i^{\text{TTN}}(z) = \mu_i^S + \sigma_i^S \frac{z - \mu_i^T}{\sigma_i^T}$ that match first and second moments. However, $\text{MSE}(T^{\text{TTN}}) > 0$ whenever some corruption $k_i$ is non-affine, representing a \emph{constant bias} that does not decrease with sample size or computational resources.

\textbf{AQR Advantage}: Our Theorem~\ref{thm:finite-sample-aqr} shows that for each neuron, AQR's error decreases as:
\begin{itemize}
    \item $O(K^{-4})$ with quantile resolution (quartic convergence from linear interpolation)
    \item $O(n_S^{-1/2})$ and $O(n_T^{-1/2})$ with sample sizes (optimal parametric rates)
    \item Additional logarithmic factors $\log(1/\delta)$ accounting for confidence level
\end{itemize}

\textbf{Practical Consequence}: For sufficiently large values of $K$, $n_S$, and $n_T$, we have $\text{MSE}_i(T_{i,K,n}^{\text{AQR}}) < \text{MSE}_i(T_i^{\text{TTN}})$ for each neuron, providing theoretical justification for AQR's superior empirical performance in domain adaptation tasks.\\

\textbf{Extension to Full Network}: The total network MSE is simply $\text{MSE}(T_{K,n}^{\text{AQR}}) = \sum_{i=1}^m \text{MSE}_i(T_{i,K,n}^{\text{AQR}})$. Since each neuron achieves the convergence rates established in our theorem, the network-level performance inherits the same asymptotic behavior, making AQR superior to TTN at the network level as well.

\section{Tail Calibration Strategy: Gaussian Estimation}
\label{app:gaussian-tail}
Here we present additional details on the \emph{Gaussian Estimation} strategy described in \ref{sec:tails}. Rather than using empirical extrema, this approach models both source and target distributions as Gaussian, and estimates their theoretical tail values, as shown in Equation \ref{eq:gaussian_estimation}.
\begin{equation}
\text{AQR}(x) = 
\begin{cases} 
\left( \dfrac{x - Q(0)^T}{Q(1)^T - Q(0)^T} \cdot \left( Q(1)^S - Q(0)^S \right) \right) + Q(0)^S & x < p_1^T \\
\left( \dfrac{x - Q(99)^T}{Q(100)^T - Q(99)^T} \cdot \left(Q(100)^S - Q(99)^S \right) \right) + p_{99}^S & x \geq p_{99}^T 
\end{cases}
\label{eq:gaussian_estimation}
\end{equation}

Here the source and target quantile functions are
\[
Q(p)^S = \beta + \gamma\,\Phi^{-1}(p), \qquad
Q(p)^T = \mu(X) + \sigma(X)\,\Phi^{-1}(p),
\]
where \(\Phi^{-1}(p)=\sqrt{2}\,\mathrm{erf}^{-1}(2p-1)\) is the probit transform.  
\(X\) denotes all activations from a given channel of a layer for the test input.
\end{document}